\documentclass[lettersize,journal]{IEEEtran}
\usepackage{amsmath,amsfonts}
\usepackage{algorithmic}
\usepackage{algorithm}
\usepackage{array}
\usepackage[caption=false,font=normalsize,labelfont=sf,textfont=sf]{subfig}
\usepackage{textcomp}
\usepackage{stfloats}
\usepackage{url}
\usepackage{verbatim}
\usepackage{graphicx}
\usepackage[noadjust]{cite}
\usepackage{amssymb}
\usepackage{booktabs}
\usepackage{calc}
\usepackage{mathrsfs}
\usepackage{multirow}
\usepackage[rgb]{xcolor}

\everymath{\displaystyle}

\newtheorem{definition}{Definition}
\newtheorem{lemma}{Lemma}
\newtheorem{theorem}{Theorem}

\newcommand{\conttensor}[1]{\ensuremath{\boldsymbol{\mathsf{#1}}}}  % 连续张量
\newcommand{\mat}[1]{\ensuremath{\boldsymbol{#1}}}  % 矩阵
\newcommand{\operator}[1]{\ensuremath{\mathscr{#1}}}  % 算子
\newcommand{\tensor}[1]{\ensuremath{\boldsymbol{\mathcal{#1}}}}  % 张量
\DeclareMathOperator*{\argmin}{argmin}
\DeclareMathOperator{\fold}{fold}  % 张量折叠（展开的逆运算）
\newcolumntype{L}{@{\extracolsep{\fill}}l}
\newcolumntype{R}{@{\extracolsep{\fill}}r}
\newcolumntype{C}{@{\extracolsep{\fill}}c}

\begin{document}

\title{Neural Operator-Grounded Continuous Tensor Function Representation and Its Applications}

\author{Ruoyang Su, Xi-Le Zhao,~\IEEEmembership{Senior Member,~IEEE}, Sheng Liu, Wei-Hao Wu, Yisi Luo, and Michael K. Ng
        % <-this % stops a space
\thanks{%This research is supported by ...
\textit{(Corresponding author: Xi-Le Zhao)}}% <-this % stops a space
%\thanks{Manuscript received April 19, 2021; revised August 16, 2021.}
\thanks{Ruoyang Su, Xi-Le Zhao, Sheng Liu and Wei-Hao Wu are with the School of Mathematical Sciences, University of Electronic Science and Technology of China, Chengdu 611731, China (e-mail: 202511110508@std.uestc.edu.cn; xlzhao122003@163.com; liusheng16@163.com; weihaowu99@163.com).}
\thanks{Yisi Luo is with the School of Mathematics and Statistics, Xi'an Jiaotong University, Xi'an 710049, China (e-mail: yisiluo1221@foxmail.com).}
\thanks{Michael K. Ng is with the Department of Mathematics, Hong Kong Baptist University, Hong Kong, China (e-mail: michael-ng@hkbu.edu.hk).}
}

% The paper headers
%\markboth{Journal of \LaTeX\ Class Files,~Vol.~14, No.~8, August~2021}%
%{Shell \MakeLowercase{\textit{et al.}}: A Sample Article Using IEEEtran.cls for IEEE Journals}

%\IEEEpubid{0000--0000/00\$00.00~\copyright~2021 IEEE}
% Remember, if you use this you must call \IEEEpubidadjcol in the second
% column for its text to clear the IEEEpubid mark.

\maketitle

\begin{abstract}
Recently, continuous tensor functions have attracted increasing attention, because they can unifiedly represent data both on mesh grids and beyond mesh grids.
However, since mode-$n$ product is essentially discrete and linear, the potential of current continuous tensor function representations is still locked.
To break this bottleneck, we suggest neural operator-grounded mode-$n$ operators as a continuous and nonlinear alternative of discrete and linear mode-$n$ product.
Instead of mapping the discrete core tensor to the discrete target tensor, proposed mode-$n$ operator directly maps the continuous core tensor function to the continuous target tensor function, which provides a genuine continuous representation of real-world data and can ameliorate discretization artifacts.
Empowering with continuous and nonlinear mode-$n$ operators, we propose a neural operator-grounded continuous tensor function representation (abbreviated as NO-CTR), which can more faithfully represent complex real-world data compared with classic discrete tensor representations and continuous tensor function representations.
Theoretically, we also prove that any continuous tensor function can be approximated by NO-CTR.
To examine the capability of NO-CTR, we suggest an NO-CTR-based multi-dimensional data completion model. Extensive experiments across various data on regular mesh grids (multi-spectral images and color videos), on mesh girds with different resolutions (Sentinel-2 images) and beyond mesh grids (point clouds) demonstrate the superiority of NO-CTR.
\end{abstract}

\begin{IEEEkeywords}
Continuous tensor function, tensor decomposition, neural operator, multi-dimensional data completion
\end{IEEEkeywords}

\section{Introduction}

\begin{figure}[t]
	\centering
	\includegraphics[width=\linewidth]{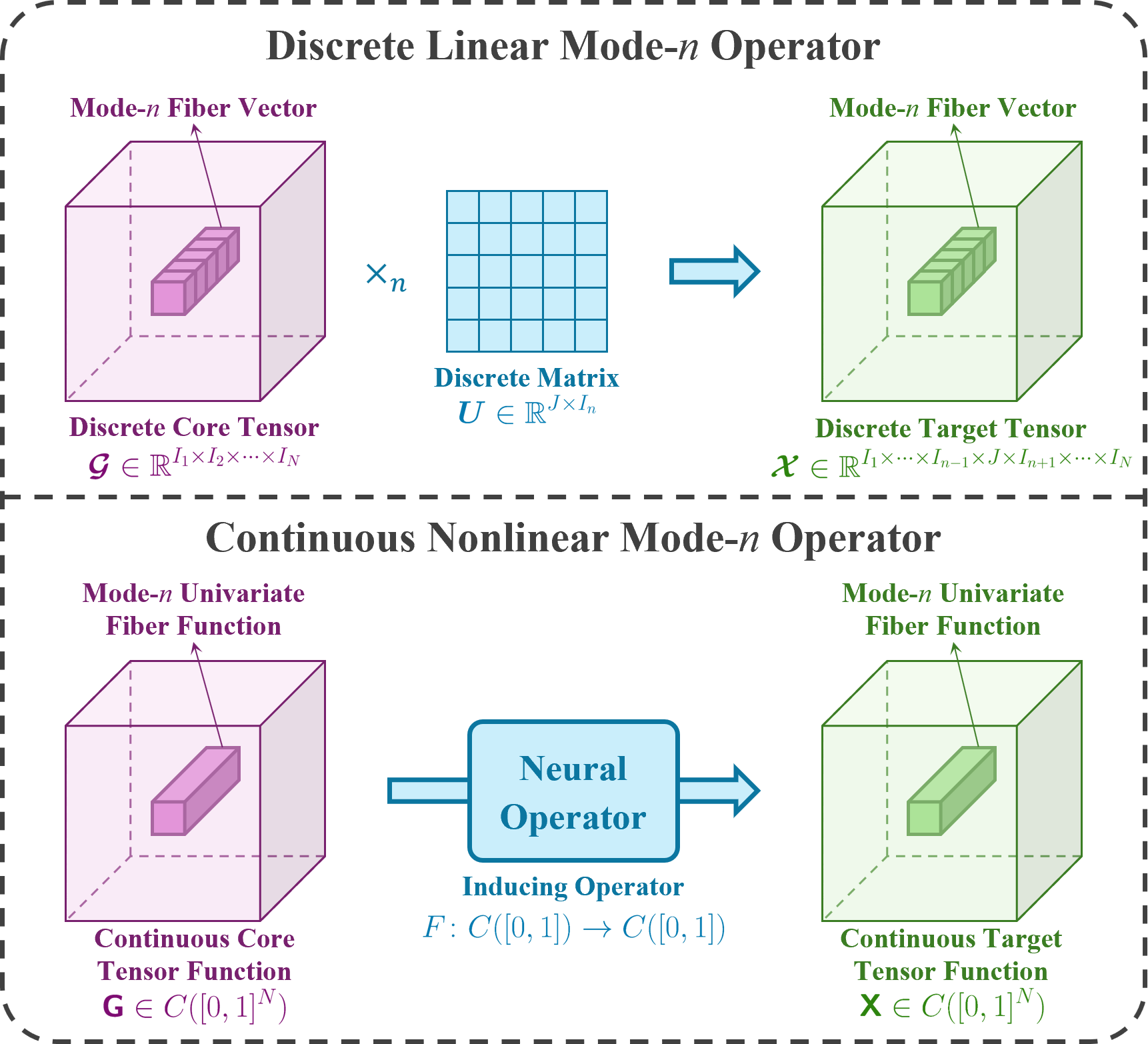}
	\caption{Discrete and linear mode-$n$ operator and proposed continuous and nonlinear mode-$n$ operator. The basic unit performed upon by discrete and linear mode-$n$ operator is the mode-$n$ fiber vectors of the discrete core tensor $\tensor{G}$, which are then mapped to the mode-$n$ fiber vectors of the discrete target tensor $\tensor{X}$. The basic unit performed upon by continuous and nonlinear mode-$n$ operator is the mode-$n$ univariate fiber functions of the continuous core tensor function $\conttensor{G}$, which are then mapped to the mode-$n$ univariate fiber functions of the continuous target tensor function $\conttensor{X}$.} \label{fig: motivation}
\end{figure}

\IEEEPARstart{M}{ulti-dimensional} data \cite{hou2022robust,chen2022bayesian,wang2023guaranteed,he2024multi} is the foundation of modern fields, and is widely used in various application scenarios, including remote sensing imaging \cite{he2022non,zhang2024full}, object detection \cite{li2023a,wang2025geometry,fu2025protd}, and signal processing \cite{zhang2022low,jiang2024multi,su2025deep}. Effectively representing and capturing the intrinsic structure of these multi-dimensional data is crucial for tasks such as classification \cite{salut2023randomized,zhang2025camgnet}, compression \cite{zhu2025implicit,shi2025dynamic}, and recovery \cite{xie2018kronecker,zhang2021low,xie2025irtf,xu2025to}. Simply flattening or vectorizing these multi-dimensional data matrices \cite{candes2009exact,cai2010a,fornasier2011low} often fails to maintain the high-order interactions, resulting in poor performance and interpretability. Therefore, developing effective methods to represent and process multi-dimensional data still remains a fundamental challenge.

As natural extensions of vectors and matrices to higher dimensions, tensors have become fundamental and common structures for representing multi-dimensional data. Traditional tensor methods mainly focus on low-rank decompositions.
The CANDECOMP/PARAFAC (CP) decomposition \cite{carroll1970analysis,harshman1970foundations} aims to decompose a tensor into the sum of rank-one components. The Tucker decomposition \cite{tucker1966some} focuses on decomposing a tensor into a core tensor multiplied by matrices along each mode. The tensor singular value decomposition (t-SVD) \cite{kilmer2008a,kilmer2011factorization,kilmer2013third} is designed to decompose a third-order tensor into the t-product of an f-diagonal tensor and two orthogonal tensors. The tensor train (TT) decomposition \cite{oseledets2011tensor} involves decomposing an $N$th-order tensor into a sequence of a matrix, $N-2$ third-order tensors, and another matrix. The tensor ring (TR) decomposition \cite{zhao2016tensor} is characterized by decomposing an $N$th-order tensor into $N$ third-order tensors multiplied in a circular manner.

The emergence of continuous tensor functions has revolutionized multi-dimensional data representation. By mapping the coordinates to corresponding data values, continuous tensor functions eliminate the dependency on fixed mesh grids, and seamlessly unify data on mesh grids and beyond mesh grids into a single flexible framework, providing a more flexible representation of multi-dimensional data.
Borrowing the wisdom of traditional tensor decompositions, researchers have developed a series of continuous tensor function representations.
Imaizumi et al. \cite{imaizumi2017tensor} proposed the smoothed Tucker decomposition (STD), which approximates an observed tensor by a small number of basis functions, and decomposes them through Tucker decomposition.
Fang et al. \cite{fang2024functional} proposed functional Bayesian Tucker decomposition (FunBaT). They treat the continuous-indexed data as the interaction between the Tucker core and a group of latent functions, and use Gaussian processes as functional priors to model the latent functions.
Furthermore, Luo et al. \cite{luo2024low} proposed a low-rank tensor function representation (LRTFR) parameterized by multi-layer perceptrons, which can continuously represent data beyond meshgrid with powerful representation abilities.
Despite this breakthrough, existing continuous tensor function representations are constrained by a fundamental limitation: the mappings (mode-$n$ product) from discrete core tensors to discrete target tensors are essentially discrete and linear, which renders them inadequate for capturing the complex structure of real-world data, leaving the potential of continuous tensor functions largely locked.

To break this bottleneck, in this paper, we suggest a continuous and nonlinear alternative of mode-$n$ product, i.e. continuous and nonlinear mode-$n$ operators.
To achieve this goal, we introduce neural operators \cite{anandkumar2019neural,lu2021learning,li2021fourier} into the field of continuous tensor function representations for the first time, to genuinely capture the complex structure of real-world data.
Concretely,  we leverage a neural operator to directly map the mode-$n$ univariate functions of the continuous core tensor function to those functions of the continuous target tensor function.
Then we propose a neural operator-grounded continuous tensor function representation (NO-CTR), to faithfully represent complex real-world data as a continuous core tensor function composite with a series of continuous and nonlinear mode-$n$ operators.
Figure \ref{fig: motivation} compares previous discrete and linear mode-$n$ operators and proposed continuous and nonlinear mode-$n$ operators.
Benefit from the powerful capability of continuous and nonlinear mode-$n$ operators to capture nonlinear relations, proposed NO-CTR can represent complex real-world data more faithfully.
NO-CTR not only unlocks the potential of continuous tensor functions, but also bridges the gap between neural operators and tensor representations.

The main contributions of this paper are as follows:

(1) To break the limitation of discrete and linear mode-$n$ product, we suggest neural operator-grounded mode-$n$ operators as a continuous and nonlinear alternative, which provides a genuine continuous representation of real-world data and can ameliorate discretization artifacts.

(2) Empowering with continuous and nonlinear mode-$n$ operators, we propose a neural operator-grounded continuous tensor function representation (NO-CTR), which can faithfully represent complex real-world data.

(3) Theoretically, we prove that any continuous tensor function can be approximated by NO-CTR. Besides, to examine the capability of NO-CTR, we suggest an NO-CTR-based multi-dimensional data completion model.

(4) Extensive experiments across various data on regular mesh grids (multi-spectral images and color videos), on mesh girds with different resolutions (Sentinel-2 images) and beyond mesh grids (point clouds) demonstrate the superiority of NO-CTR.

\section{Related Work}

\subsection{Continuous Tensor Function Representations}

One of the most popular continuous tensor function representations is implicit neural representation (INR), whose aim is to use a deep neural network to learn the continuous tensor function of the multi-dimensional data with respect to the coordinates.
It was originally used for 3D shape representation, such as \cite{park2019deepsdf,mescheder2019occupancy}.
After that, it was also widely used for representing various multi-dimensional data.
Sitzmann et al. \cite{sitzmann2020implicit} proposed to leverage periodic activation functions for implicit neural representations and demonstrated that these networks (called SIRENs) are ideally suited for representing complex natural signals and their derivatives.
Tancik et al. \cite{tancik2020fourier} suggested an approach for selecting problem-specific Fourier features that greatly improves the performance of  multi-layer perceptrons (MLPs) for low-dimensional regression tasks relevant to the computer vision and graphics communities.
Saragadam et al. \cite{saragadam2023wire} developed a new, highly accurate and robust INR: wavelet implicit neural representation (WIRE), a new INR based on a complex Gabor wavelet activation function.
Shi et al. \cite{shi2024improved} proposed a Fourier reparameterization method which learns coefficient matrix of fixed Fourier bases to compose the weights of multi-layer perceptron.
Recently, numerous researchers have extensive much work \cite{fathony2021multiplicative,chen2021llearning,yuce2022a,saragadam2022miner,li2025superpixel,zhou2025frequency} in this area.
INR provides a novel representation for multi-dimensional data, but it does not take into account the intrinsic structure and the interactions between factors of multi-dimensional data.

\subsection{Neural Operators}

Recently, neural operators have received extensive attention in scientific computing. In mathematics, a neural operator is a mapping from one function to another \cite{boulle2024a,kovachki2024operator}, with nonlinear structures and powerful capabilities. They have demonstrated great potential in solving partial differential equations usually by learning a mapping from the parameter function to the solution function \cite{kovachki2023neural}.
Anandkumar et al. \cite{anandkumar2019neural} formulated approximation of the infinite-dimensional mapping by composing nonlinear activation functions and a class of integral operators, where kernel integration is computed by message passing on graph networks.
Lu et al. \cite{lu2021learning} proposed a general deep learning framework, the deep operator network (DeepONet), to learn diverse continuous and nonlinear operators.
Li et al. \cite{li2021fourier} formulated a new neural operator by parameterizing the integral kernel directly in Fourier space, allowing for an expressive and efficient architecture.
In summary, neural operators are powerful tools usually with nonlinear structures, which can flexibly map from one function to another.

\section{Notations and Preliminaries}

Throughout this paper, scalars, vectors, matrices, and (discrete) tensors are denoted by $x$, $\mat{x}$, $\mat{X}$, and $\tensor{X}$, respectively. And $\tensor{X}(i_1,i_2,\cdots,i_N)$ denotes the $(i_1,i_2,\cdots,i_N)$th element of an $N$th-order discrete tensor $\tensor{X} \in \mathbb{R}^{I_1 \times I_2 \times \cdots \times I_N}$.

An $N$th-order continuous tensor function\footnote{There are other names of this concept in previous literature, such as continuous-indexed tensor \cite{fang2024functional}, functional tensor \cite{vemuri2025functional}, tensor function \cite{luo2024low}.}, denoted by $\conttensor{X}$, is an $N$-variable continuous function with a dense and complete domain, whose input is a coordinate with $N$ dimensions, and whose output is the corresponding value in the target data.
Without loss of generality, in this paper, we normalize the coordinates to $[0,1]$, so $\conttensor{X} \in C([0,1]^N)$.

Fibers are the higher-order analogue of matrix rows and columns. A fiber is defined by fixing every index but one \cite{kolda2009tensor}. For discrete tensor $\tensor{X} \in \mathbb{R}^{I_1 \times I_2 \times \cdots \times I_N}$, the mode-$n$ fiber vectors are denoted by
\[ \tensor{X}(i_1, i_2, \cdots, i_{n-1}, :, i_{n+1}, \cdots, i_N) \in \mathbb{R}^{I_n}. \]
Similarly, for continuous tensor function $\conttensor{X} \in C([0,1]^N)$, the mode-$n$ univariate fiber functions are denoted by
\[ \conttensor{X}(y_1, y_2, \cdots, y_{n-1}, \cdot, y_{n+1}, \cdots, y_N) \in C([0,1]). \]

\begin{definition}[Mode-$n$ Unfolding \cite{kolda2009tensor}]
	Given a discrete tensor $\tensor{X} \in \mathbb{R}^{I_1 \times I_2 \times \cdots \times I_N}$, the mode-$n$ unfolding, denoted by $\mat{X}_{(n)}$, arranges mode-$n$ fiber vectors to be the columns of the resulting matrix, which is defined as follows:
	\[ \mat{X}_{(n)}(i_n, j) = \tensor{X}(i_1, i_2, \cdots, i_n, \cdots, i_N), \]
	where $j = 1 + \sum_{\substack{k=1 \\ k \neq n}}^{N} (i_k-1) \prod_{\substack{m=1 \\ m \neq n}}^{k-1} I_m$, and $n=1,2,\cdots,N$. The inverse operation of mode-$n$ unfolding is denoted as $\fold_n$, i.e. $\tensor{X} = \fold_n(\mat{X}_{(n)})$.
\end{definition}

\section{Revisit of Mode-$n$ Product} \label{sec:mode-n product}

Previous tensor representations typically represent a tensor as a set of factors. The operations or connections between factors largely determine the representation capability.
In this section, we will revisit them from a new perspective---the perspective of operators.

Mode-$n$ product is widely used in tensor representations, which is defined as follows:
\begin{definition}[Mode-$n$ Product \cite{kolda2009tensor}] \label{df:mode-n product}
	Given an $N$th-order discrete tensor $\tensor{G} \in \mathbb{R}^{I_1 \times I_2 \times \cdots \times I_N}$ and a matrix $\mat{U} \in \mathbb{R}^{J \times I_n}$, the mode-$n$ product $\tensor{G} \times_n \mat{U} \in \mathbb{R}^{I_1 \times \cdots \times I_{n-1} \times J \times I_{n+1} \times \cdots \times I_N}$ is defined as
	\begin{align*}
		& \mathrel{\phantom{=}} \tensor{G} \times_n \mat{U} (i_1, i_2, \cdots, i_{n-1}, j, i_{n+1}, \cdots, i_N)\\
		& = \sum_{i_n=1}^{I_n} \tensor{G}(i_1, i_2, \cdots, i_{n-1}, i_n, i_{n+1}, \cdots, i_N) \mat{U}(j,i_n),
	\end{align*}
	which can also be equivalently expressed as
	\[ \tensor{G} \times_n \mat{U} = \fold_n (\mat{U} \mat{G}_{(n)}), \]
	where $n=1,2,\cdots,N$.
\end{definition}

To better understand the role of mode-$n$ product, we reinterpret it from the perspective of operators:

\begin{definition}[Discrete and Linear Mode-$n$ Operator]
	Given an $N$th-order discrete tensor $\tensor{G} \in \mathbb{R}^{I_1 \times I_2 \times \cdots \times I_N}$ and a matrix $\mat{U} \in \mathbb{R}^{J \times I_n}$, the discrete and linear mode-$n$ operator
	\[ \operator{U}^{[n]} \colon \mathbb{R}^{I_1 \times \cdots \times I_n \times \cdots \times I_N} \to \mathbb{R}^{I_1 \times \cdots \times I_{n-1} \times J \times I_{n+1} \times \cdots \times I_N} \]
	induced by $\mat{U}$ is defined as
	\begin{equation} \label{eq:discrete mode-n operator}
		\begin{split}
			& \mathrel{\phantom{=}} \operator{U}^{[n]} (\tensor{G}) (i_1, i_2, \cdots, i_{n-1}, j, i_{n+1}, \cdots, i_N) \\
			& = (\mat{U} \tensor{G}(i_1, i_2, \cdots, i_{n-1}, :, i_{n+1}, \cdots, i_N))(j),
		\end{split}
	\end{equation}
	where $n=1,2,\cdots,N$.
\end{definition}

Combined with Definition \ref{df:mode-n product}, it is easy to identify that discrete and linear mode-$n$ operators are equivalent to mode-$n$ product, i.e.
\[ \operator{U}^{[n]} (\tensor{G}) = \tensor{G} \times_n \mat{U}. \]
As shown in Figure \ref{fig: motivation}, the basic unit performed upon by discrete and linear mode-$n$ operator $\operator{U}^{[n]}$ is the mode-$n$ fiber vectors
\[ \tensor{G}(i_1, i_2, \cdots, i_{n-1}, :, i_{n+1}, \cdots, i_N) \in \mathbb{R}^{I_n} \]
of the discrete tensor $\tensor{G}$, which are then mapped to the mode-$n$ fiber vectors
\[ \operator{U}^{[n]} (\tensor{G}) (i_1, i_2, \cdots, i_{n-1}, :, i_{n+1}, \cdots, i_N) \in \mathbb{R}^{J} \]
of the discrete target tensor $\operator{U}^{[n]} (\tensor{G})$.
Therefore, in mode-$n$ product, the mapping from $\tensor{G}$ to $\operator{U}^{[n]} (\tensor{G})$ is inherently discrete and linear.

From the perspective of operators, we review the previous representations Tucker decomposition \cite{tucker1966some} and LRTFR \cite{luo2024low} in this perspective.
They both represent the discrete target tensor as a discrete core tensor composite with a set of discrete and linear mode-$n$ operators.
In mathematics, both of them can be reformulated as
\begin{equation} \label{eq:tucker}
	\tensor{X} = \operator{U}^{[N]} \circ \cdots \circ \operator{U}^{[2]} \circ \operator{U}^{[1]} (\tensor{G}),
\end{equation}
where $\tensor{G}$ is the discrete core tensor, and $\operator{U}^{[n]}$ is the discrete and linear mode-$n$ operator induced by factor matrix $\mat{U}_n$ ($n=1,2,\cdots,N$).
In Tucker decomposition \cite{tucker1966some}, $\{\mat{U}_n\}_{n=1}^N$ are ordinary matrices, while in LRTFR \cite{luo2024low}, $\{\mat{U}_n\}_{n=1}^N$ are matrices sampled from functions implemented by implicit neural representations.
Although the core tensor or factor matrices can by generated by neural networks, the mapping from discrete core tensor $\tensor{G}$ to discrete target tensor $\tensor{X}$ is essentially linear, which limits the capabilities of these representations. Besides, the real-world data are usually complex and contain generous nonlinear relations, which are also difficult to precisely capture merely through linear mappings.

\section{Neural Operator-Grounded Continuous Tensor Function Representation}

In this section, we will propose continuous and nonlinear mode-$n$ operators. Empowering with this, we will propose a neural operator-grounded continuous tensor function representation (NO-CTR), and will prove that any continuous tensor function can be approximated by NO-CTR. Lastly, we will suggest an NO-CTR-based multi-dimensional data completion model.

\subsection{Continuous and Nonlinear Mode-$n$ Operators}

To break the above limitations of discrete and linear mode-$n$ operators, in this subsection, we will propose continuous and nonlinear mode-$n$ operators.

In Section \ref{sec:mode-n product}, we revisited mode-$n$ product from the perspective of operators. In this perspective, the basic unit performed upon by discrete and linear mode-$n$ operator is the mode-$n$ fiber vectors.
This perspective inspires us to define new interactions for continuous tensor functions. Specifically, proposed continuous and nonlinear mode-$n$ operators directly perform on the mode-$n$ univariate fiber functions of a continuous tensor function, preserving this key property of discrete and linear mode-$n$ operators. Below we provide a definition of continuous and nonlinear mode-$n$ operators.

\begin{definition}[Continuous and Nonlinear Mode-$n$ Operator]
	Given an $N$th-order continuous tensor function $\conttensor{G} \in C([0,1]^N)$ and a continuous operator $F \colon C([0,1]) \to C([0,1])$, the continuous and nonlinear mode-$n$ operator
	\[ \operator{F}^{\langle n \rangle} \colon C([0,1]^N) \to C([0,1]^N) \]
	induced by $F$ is defined as
	\begin{equation} \label{eq:continuous mode-n operator}
		\begin{split}
			& \mathrel{\phantom{=}} \operator{F}^{\langle n \rangle} (\conttensor{G}) (y_1, y_2, \cdots, y_N) \\
			& = F \bigl( \conttensor{G}(y_1, y_2, \cdots, y_{n-1}, \cdot, y_{n+1}, \cdots, y_N) \bigr) (y_n),
		\end{split}
	\end{equation}
	where $n=1,2,\cdots,N$. To avoid confusion, we refer to $F$ as the inducing operator.
\end{definition}

In Equation \eqref{eq:continuous mode-n operator}, we start from a continuous tensor function $\conttensor{G}$ instead of a discrete tensor $\tensor{G}$.
As shown in Figure \ref{fig: motivation}, the basic unit performed upon by continuous and nonlinear mode-$n$ operator $\operator{F}^{\langle n \rangle}$ is the mode-$n$ univariate fiber functions
\[ \conttensor{G}(y_1, y_2, \cdots, y_{n-1}, \cdot, y_{n+1}, \cdots, y_N) \in C([0,1]) \]
of the continuous tensor function $\conttensor{G}$, which are then mapped to the mode-$n$ univariate fiber functions
\[ F \bigl( \conttensor{G}(y_1, y_2, \cdots, y_{n-1}, \cdot, y_{n+1}, \cdots, y_N) \bigr) \in C([0,1]) \]
of the continuous target tensor function $\operator{F}^{\langle n \rangle} (\conttensor{G})$. Besides, the inducing operator $F$ is usually nonlinear, so the continuous and nonlinear mode-$n$ operator is a nonlinear mapping from $\conttensor{G}$ to $\operator{F}^{\langle n \rangle} (\conttensor{G})$.

\begin{figure}[!tb]
	\centering \fontsize{6pt}{7pt} \selectfont \setlength{\tabcolsep}{0.1em}
	\begin{tabular*}{\linewidth}{@{}CCCC@{}}
		Observation & Discrete \& Linear & Continuous \& Nonlinear & Ground Truth \\
		\includegraphics[width=\linewidth/4-0.5ex]{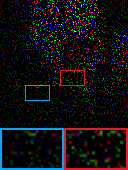} &
		\includegraphics[width=\linewidth/4-0.5ex]{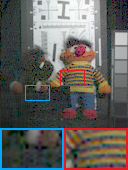} &
		\includegraphics[width=\linewidth/4-0.5ex]{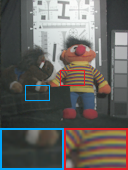} &
		\includegraphics[width=\linewidth/4-0.5ex]{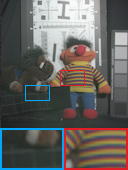} \\
		\parbox{6em}{\begin{tabular}{rl}
				PSNR: & 11.573 \\ SSIM: & 0.277
		\end{tabular}} & \parbox{6em}{\begin{tabular}{rl}
				PSNR: & 37.661 \\ SSIM: & 0.975
		\end{tabular}} & \parbox{6em}{\begin{tabular}{rl}
				PSNR: & 42.294 \\ SSIM: & 0.995
		\end{tabular}} & \parbox{6em}{\begin{tabular}{rl}
				PSNR: & Inf \\ SSIM: & 1.000
		\end{tabular}} \\
		\includegraphics[width=\linewidth/4-0.5ex]{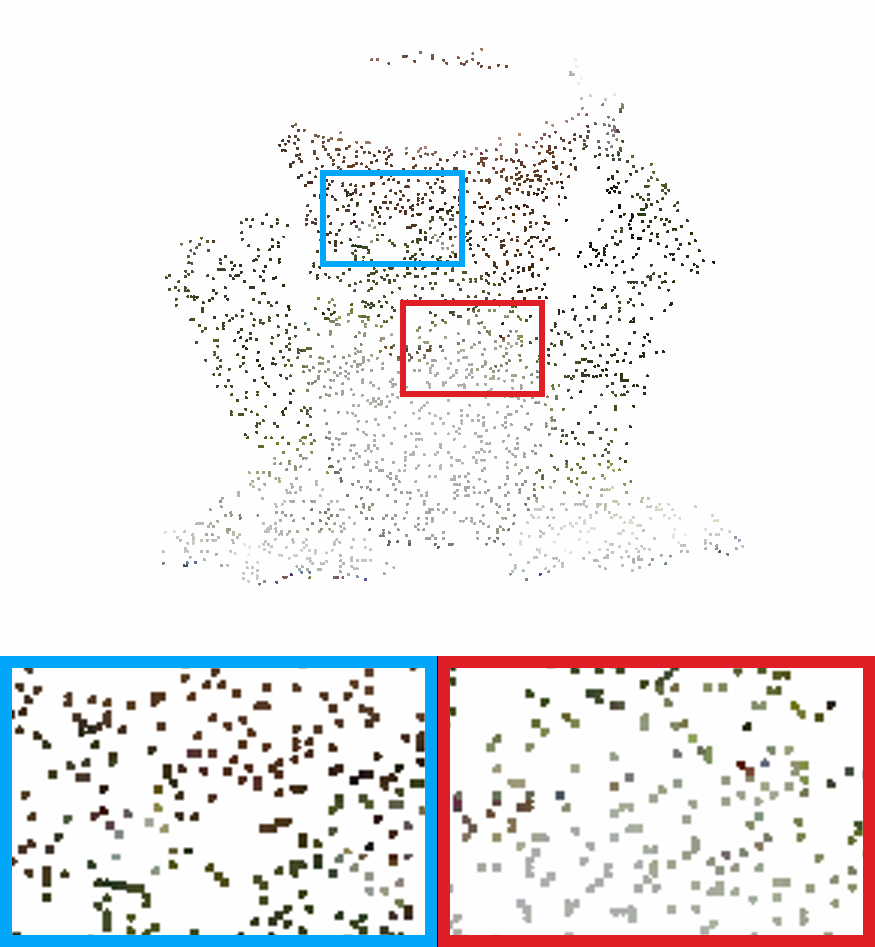} &
		\includegraphics[width=\linewidth/4-0.5ex]{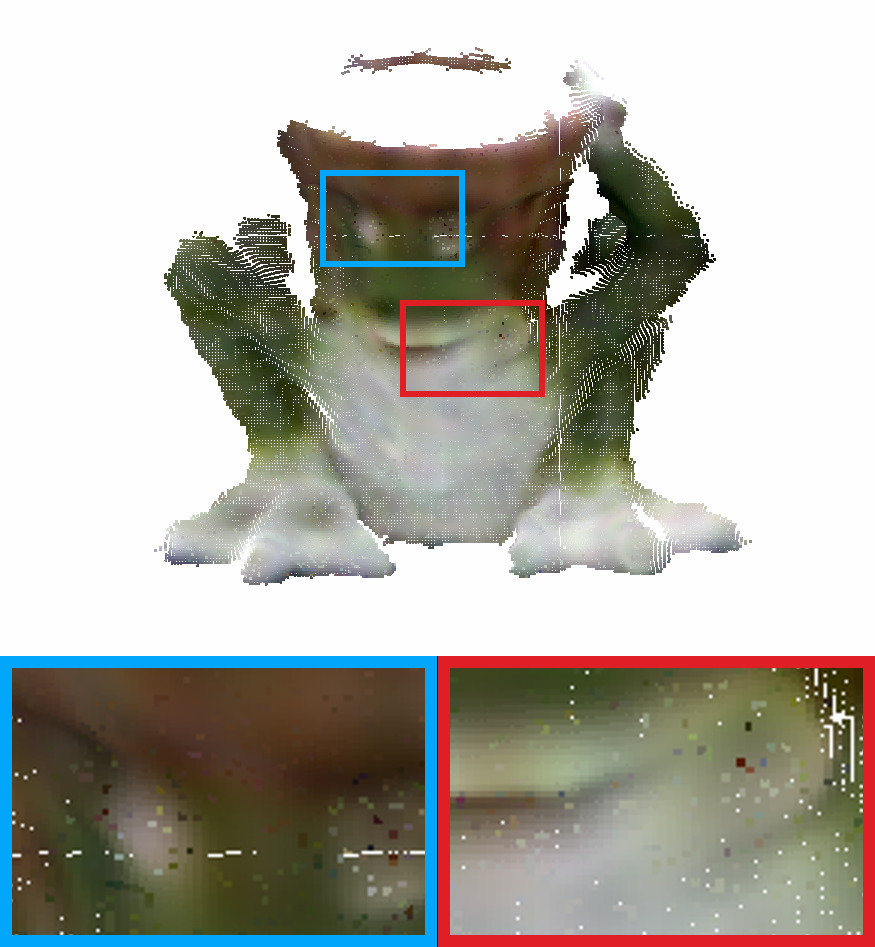} &
		\includegraphics[width=\linewidth/4-0.5ex]{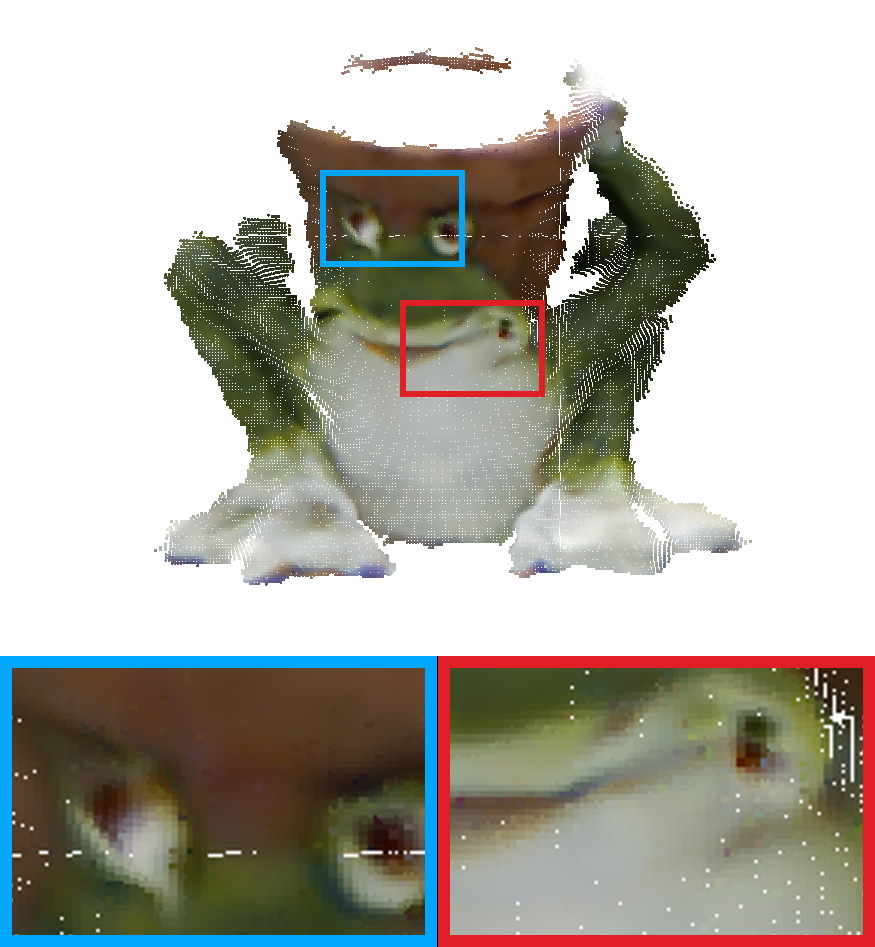} &
		\includegraphics[width=\linewidth/4-0.5ex]{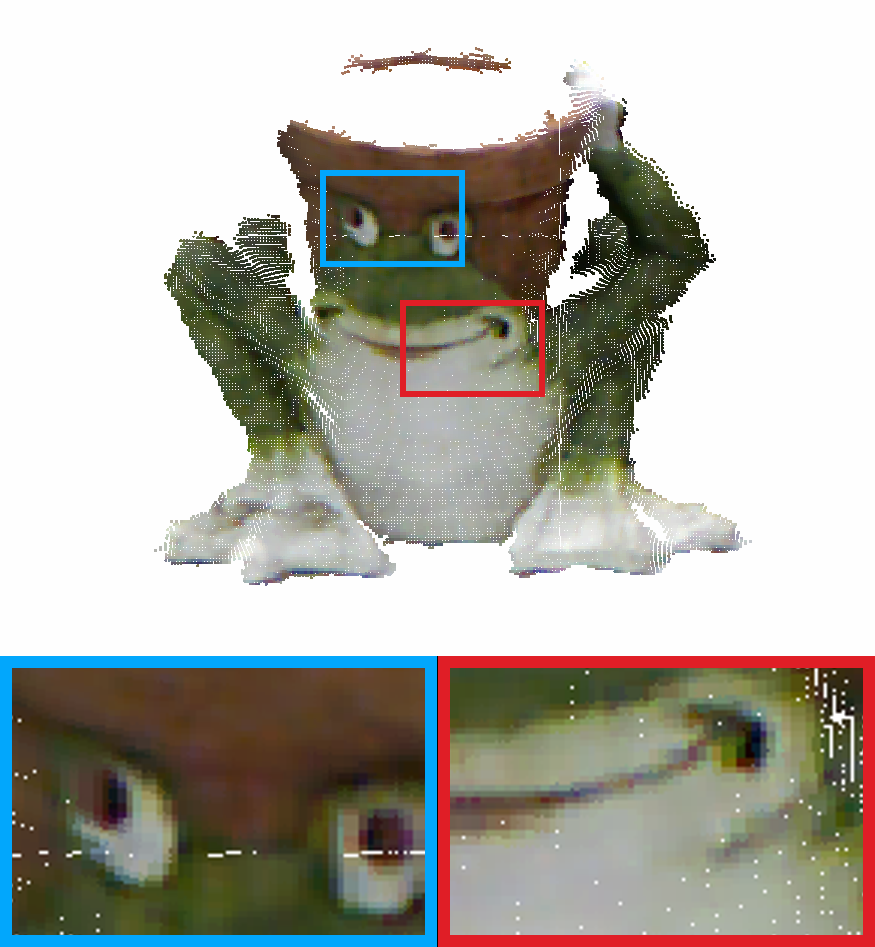} \\
		\parbox{6.5em}{\begin{tabular}{rl}
				NRMSE: & 0.457 \\ $R^2$: & -2.894
		\end{tabular}} & \parbox{6.5em}{\begin{tabular}{rl}
				NRMSE: & 0.054 \\ $R^2$: & 0.947
		\end{tabular}} & \parbox{6.5em}{\begin{tabular}{rl}
				NRMSE: & 0.041 \\ $R^2$: & 0.969
		\end{tabular}} & \parbox{6.5em}{\begin{tabular}{rl}
				NRMSE: & 0.000 \\ $R^2$: & 1.000
		\end{tabular}}
	\end{tabular*}
	\caption{Visual and quantitative recovery results with discrete and linear mode-$n$ operators (LRTFR \cite{luo2024low}), continuous and nonlinear mode-$n$ operators (proposed NO-CTR) on the MSI \textit{Toy} (top) and point cloud \textit{Frog} (bottom) when the sampling rates are $10\%$.} \label{fig:lrtfr vs ours}
\end{figure}

Finally, we use experimental results to further display the advantages of continuous and nonlinear operations.
Figure \ref{fig:lrtfr vs ours} shows the visual and quantitative recovery results with discrete and linear mode-$n$ operators (LRTFR \cite{luo2024low}), continuous and nonlinear mode-$n$ operators (proposed NO-CTR) when the sampling rates are $10\%$.
In terms of quantitative results, proposed NO-CTR outperforms LRTFR on various multi-dimensional data. In terms of visual results, proposed NO-CTR is capable of more accurately recovering details, such as the stripes on clothes and the eyes of frogs.
This reveals the superiority of proposed continuous and nonlinear mode-$n$ operators.

\subsection{Continuous Tensor Function Representation}

Empowering with continuous and nonlinear mode-$n$ operators, in this subsection, we will propose a neural operator-grounded continuous tensor function representation (NO-CTR).
Here we present a mathematical definition.

\begin{definition}[NO-CTR]
	For an $N$th-order continuous tensor function $\conttensor{X} \in C([0,1]^N)$, the neural operator-grounded continuous tensor function representation (abbreviated as NO-CTR) of $\conttensor{X}$ can be formulated as
	\begin{equation} \label{eq:no-ctr}
		\conttensor{X} = \operator{F}_N^{\langle N \rangle} \circ \cdots \circ \operator{F}_2^{\langle 2 \rangle} \circ \operator{F}_1^{\langle 1 \rangle} (\conttensor{G}),
	\end{equation}
	where $\conttensor{G} \in C([0,1]^N)$ is the continuous core tensor function, and $\operator{F}_n^{\langle n \rangle}$ is the continuous and nonlinear mode-$n$ operator induced by $F_n$ ($n=1,2,\cdots,N$).
\end{definition}

In Equation \eqref{eq:tucker}, the previous representations Tucker decomposition \cite{tucker1966some} and LRTFR \cite{luo2024low} represent the discrete target tensor $\tensor{X}$ as a discrete core tensor $\tensor{G}$ composite with a set of discrete and linear mode-$n$ operators $\{\operator{U}^{[n]}\}_{n=1}^N$.
Similarly, in Equation \eqref{eq:no-ctr}, proposed NO-CTR represents complex real-world data $\conttensor{X}$ as a continuous core tensor function $\conttensor{G}$ composite with a series of continuous and nonlinear mode-$n$ operators $\{\operator{F}^{\langle n \rangle}\}_{n=1}^N$.
Owing to the powerful capability of the continuous mode-$n$ operation to capture nonlinear relations, NO-CTR can represent complex real-world data more faithfully.

\subsection{Implementations}

\begin{figure*}[!t]
	\centering
	\includegraphics[width=\linewidth]{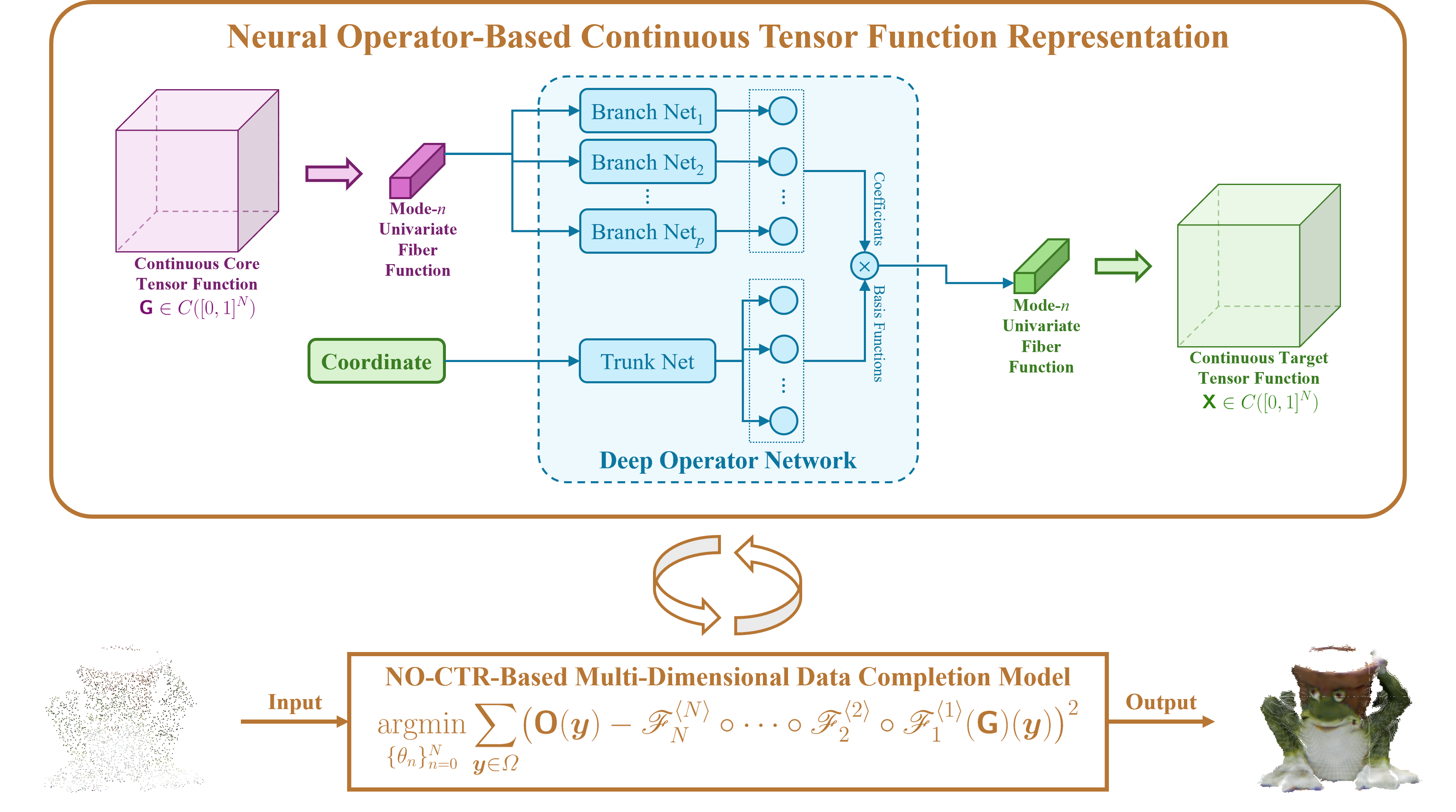}
	\caption{Flow chart of NO-CTR and the corresponding multi-dimensional data completion model. We first cleverly leverage neural operators to suggest continuous and nonlinear mode-$n$ operators. Specifically, the basic unit performed upon by continuous and nonlinear mode-$n$ operator is the mode-$n$ univariate fiber functions of the continuous core tensor function $\conttensor{G}$, which are then mapped to the mode-$n$ univariate fiber functions of the continuous target tensor function $\conttensor{X}$. Empowering with continuous and nonlinear mode-$n$ operators, we propose a neural operator-grounded continuous tensor function representation (NO-CTR), which can faithfully represent continuous target tensor function $\conttensor{X}$ as a continuous core tensor function $\conttensor{G}$ and a series of continuous and nonlinear mode-$n$ operators $\{\operator{F}^{\langle n \rangle}\}_{n=1}^N$. To examine the capability of NO-CTR, we suggest an NO-CTR-based multi-dimensional data completion model.}
\end{figure*}

In Equation \eqref{eq:no-ctr}, the continuous core tensor function $\conttensor{G}$ and continuous and nonlinear mode-$n$ operators $\{\operator{F}^{\langle n \rangle}\}_{n=1}^N$ are the key components of NO-CTR. In this subsection, we discuss the concrete implementations.

\subsubsection{Continuous Core Tensor Function}

The continuous core tensor function $\conttensor{G} \in C([0,1]^N)$ is essentially a multivariate continuous function defined on $[0,1]^N$.
In practice, neural networks are ideally suited for representing such continuous functions, and the choice of neural networks is highly diverse. In this paper, we choose the popular implicit neural representation SIREN \cite{sitzmann2020implicit} to implement this continuous core tensor function $\conttensor{G}$, which takes a coordinate $\mat{y} \in [0,1]^N$ as input and outputs its corresponding value $\conttensor{G}(\mat{y}) \in \mathbb{R}$.
We will also discuss other implementations of the continuous core tensor function in Section \ref{sec:contribution of continuous core tensor function}.

\subsubsection{Continuous and Nonlinear Mode-$n$ Operators}

In mathematics, any function operator can be utilized to implement the continuous and nonlinear mode-$n$ operator.
In this paper, we cleverly leverage an emerging and powerful tool: neural operators, which provide a correct and efficient choice for us.
Neural operators play a significant role in scientific computing \cite{anandkumar2019neural,lu2021learning,li2021fourier}. Essentially, a neural operator is a mapping from one function to another \cite{boulle2024a,kovachki2024operator}. This enables us to directly treat a neural operator as the inducing operator, marking the first application of neural operators in the field of tensor representations.

Among the diverse neural operators, we select the deep operator networks (DeepONet) \cite{luo2024low} to implement continuous and nonlinear mode-$n$ operators, and will discuss other architectures of neural operators in Section \ref{sec:contribution of neural operators}.
DeepONet is composed of the trunk network and branch networks. Specifically, for an inducing operator $F_n \colon C([0,1]) \to C([0,1])$,
\[ F_n(v)(y) = \sum_{p=1}^{P_n} b^{(n)}_p\bigl(v(z_1),v(z_2),\cdots,v(z_{m_n})\bigr) t^{(n)}_p(y), \]
where $v \in C([0,1])$ is the input univariate function, $y \in [0,1]$ is a coordinate.
$\mat{t}^{(n)}$ is a vector-valued function called trunk network, and $\{t^{(n)}_p\}_{p=1}^{P}$ are $P$ components of $\mat{t}$, which can be physically interpreted as the basis functions of the output space.
$\{b^{(n)}_p\}_{p=1}^{P_n}$ are scalar-valued functions called branch networks, which encode the discrete sampling values of input function $v$ at fixed sensor points $\{z_i\}_{i=1}^{m_n}$ (called sensors), providing the coefficients required for the linear combination.

\subsection{Universal Approximation Theorem}

To theoretically illustrate the rationality of the above implementations, we propose and prove the density of the NO-CTR, which indicates any continuous tensor function can be approximated by NO-CTR, and can be regarded as a universal approximation theorem.

\begin{theorem} \label{th:approx}
	For any continuous tensor function $\conttensor{X} \in C([0,1]^N)$, $\forall \varepsilon > 0$, there is a fully connected neural network
	\[ \conttensor{G} \colon [0,1]^N \to \mathbb{R} \]
	and $N$ DeepONets \cite{lu2021learning}
	\[ F_n \colon C([0,1]) \to C([0,1]), \quad (n=1,2,\cdots,N) \]
	such that
	\[ \| \conttensor{X} - \operator{F}_N^{\langle N \rangle} \circ \cdots \circ \operator{F}_2^{\langle 2 \rangle} \circ \operator{F}_1^{\langle 1 \rangle} (\conttensor{G}) \| < \varepsilon, \]
	where $\operator{F}_n$ is the continuous and nonlinear mode-$n$ operator induced by $F_n$.
\end{theorem}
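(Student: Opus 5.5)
The plan is to reduce to the universal approximation theorem for fully connected networks together with the universal approximation theorem for operators underlying DeepONet \cite{lu2021learning}. The natural choice is to let the core do all the work and make each mode-$n$ operator nearly the identity. Concretely, I first pick a fully connected network $\conttensor{G}$ with $\|\conttensor{X}-\conttensor{G}\|_\infty<\varepsilon/2$, by the classical universal approximation theorem on the compact set $[0,1]^N$. It then remains to find DeepONets $F_n$ such that the composition of the induced mode-$n$ operators is uniformly within $\varepsilon/2$ of the identity, when applied to $\conttensor{G}$.

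The set of fibers of $\conttensor{G}$ is $K_0=\{\conttensor{G}(y_1,\dots,y_{n-1},\cdot,y_{n+1},\dots,y_N)\}$. It is compact in $C([0,1])$, since it is the continuous image of $[0,1]^{N-1}$ and its elements are equicontinuous because $\conttensor{G}$ is uniformly continuous. The identity operator $I\colon C([0,1])\to C([0,1])$ is continuous. By the operator approximation theorem of Chen and Chen that is used in \cite{lu2021learning}, for any compact $K\subset C([0,1])$ and any $\delta>0$ there is a DeepONet $F$ with $\sup_{v\in K}\sup_{y}|F(v)(y)-v(y)|<\delta$. If $\|F_1(v)-v\|_\infty<\delta$ on the mode-$1$ fibers of $\conttensor{G}$, then $\|\operator{F}_1^{\langle 1\rangle}(\conttensor{G})-\conttensor{G}\|_\infty<\delta$ pointwise over all $(y_1,\dots,y_N)$.

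I would then proceed inductively. Set $\conttensor{G}_0=\conttensor{G}$ and $\conttensor{G}_n=\operator{F}_n^{\langle n\rangle}(\conttensor{G}_{n-1})$. At step $n$ I need $\conttensor{G}_{n-1}$ to be continuous on $[0,1]^N$, so that its mode-$n$ fibers form a compact set $K_{n-1}$. I then choose $F_n$ approximating the identity on $K_{n-1}$ within $\varepsilon/(2N)$, and the triangle inequality gives $\|\conttensor{G}_N-\conttensor{G}\|_\infty<\varepsilon/2$.

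The main obstacle is verifying that $\conttensor{G}_n$ is jointly continuous. This holds because a DeepONet has the form $F(v)(y)=\sum_p b_p(v(z_1),\dots,v(z_m))\,t_p(y)$ with continuous branch and trunk networks. Substituting the fiber gives $\sum_p b_p(\conttensor{G}_{n-1}(\dots,z_1,\dots),\dots)\,t_p(y_n)$, which is a composition of continuous functions of $(y_1,\dots,y_N)$. This structural formula settles continuity directly, so no abstract continuity of the induced operator is needed. The only other delicate point is to fix the compact sets before choosing each $F_n$, which the inductive ordering handles. Summing the errors then gives the bound $<\varepsilon$ in the sup norm, which I take to be the norm $\|\cdot\|$ in the statement.
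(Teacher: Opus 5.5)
Your proof is correct. It shares the paper's skeleton: a fully connected core $\conttensor{G}$ within $\varepsilon/2$ of $\conttensor{X}$, and each $F_n$ a DeepONet close to the identity $I$. Where it differs is in how the accumulated error is controlled, and that is the step that matters.

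The paper invokes the Chen--Chen lemma to obtain $|F_n(u)(y)-u(y)|<\varepsilon/((2^{N+1}-2)M)$ for \emph{all} $u\in C([0,1])$. It then works with ``operator norms'' $\|F_n-I\|$ and $\|\operator{F}_n^{\langle n\rangle}\|$ taken as suprema over the unit sphere, assumes $\|\operator{F}_n^{\langle n\rangle}\|\leqslant 2$, and telescopes with weights $2^k M$. That route does not go through, for two reasons:
\begin{itemize}
\item The lemma only applies on a compact $V$. In fact no DeepONet is uniformly close to $I$ even on the unit sphere of $C([0,1])$. If $\|u\|=1$ and $u$ vanishes at every sensor, then $F_n(u)=F_n(-u)$, which forces $\max\{\|F_n(u)-u\|,\|F_n(-u)+u\|\}\geqslant 1$.
\item Inequalities such as $\|(F_n-I)(v)\|\leqslant\|F_n-I\|\,\|v\|$ and $\|\operator{F}_{k}^{\langle k\rangle}\circ\cdots\circ\operator{F}_1^{\langle 1\rangle}(\conttensor{G})\|\leqslant\prod_j\|\operator{F}_j^{\langle j\rangle}\|\,\|\conttensor{G}\|$ need positive homogeneity, which nonlinear DeepONets lack.
\end{itemize}

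Your sequential construction avoids all of this. Because you choose $F_n$ only after $\conttensor{G}_{n-1}$ is fixed, you only need the identity approximated on the compact fiber set $K_{n-1}$, which is exactly the Chen--Chen setting. The relevant fact there is that $\bar{y}\mapsto$ fiber is continuous into $C([0,1])$ with the sup norm, which is what uniform continuity gives; equicontinuity of the fibers is not what you need. The explicit branch--trunk formula gives joint continuity of $\conttensor{G}_n$, so the induction continues. The telescoping bound $\|\conttensor{G}_N-\conttensor{G}\|\leqslant\sum_n\|\conttensor{G}_n-\conttensor{G}_{n-1}\|$ then requires no norm bounds on the $\operator{F}_n^{\langle n\rangle}$ at all.

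The cost is that each $F_n$ depends on $\conttensor{G}$ and on $F_1,\dots,F_{n-1}$, which is harmless for an existence statement. Had the paper's global approximation been valid, it would have bought operators chosen independently of the core, up to the scalar $M$.
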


The proof can be found in the appendix.

\subsection{Multi-Dimensional Data Completion Model} \label{sec:model}

To examine the capability of the NO-CTR, we suggest an NO-CTR-based multi-dimensional data completion model, which aims to estimate the missing elements from an incomplete observation.

In mathematics, this model can be formulated as
\begin{equation} \label{eq:O&F model}
	\argmin_{\{\theta_n\}_{n=0}^N} \sum_{\mat{y} \in \varOmega} \bigl( \conttensor{O}(\mat{y}) - \operator{F}_N^{\langle N \rangle} \circ \cdots \circ \operator{F}_2^{\langle 2 \rangle} \circ \operator{F}_1^{\langle 1 \rangle} (\conttensor{G}) (\mat{y}) \bigr)^2,
\end{equation}
where $\conttensor{O} \in C([0,1]^N)$ is the continuous tensor function of the observation, $\mat{y}$ is a coordinate, $\varOmega \subseteq [0,1]^N$ is a set composed of the coordinates of all observed elements, $\conttensor{G} \in C([0,1]^N)$ is the continuous core tensor function, $\operator{F}_n$ is the continuous and nonlinear mode-$n$ operator induced by $F_n$, $\theta_0$ denotes the parameters of the continuous core tensor function $\conttensor{G}$, and $\theta_n$ denotes the parameters of the inducing operator $F_n$ ($n=1,2,\cdots,N$).

\section{Experiments}

To comprehensively evaluate the performance of the NO-CTR, we conduct extensive experiments across various data on regular mesh grids (multi-spectral images and color videos), on mesh girds with different resolutions (Sentinel-2 images) and beyond mesh grids (point clouds).
In these experiments, observations are generated by randomly sampling the original ground truth at four different rates: $5\%$, $10\%$, $15\%$, and $20\%$. All methods use only one observation as training data in each experiment.

For data on mesh grids, the competing methods include both traditional tensor methods (TR-ALS \cite{wang2017efficient}) and continuous representation methods (SIREN \cite{sitzmann2020implicit}, MFN \cite{fathony2021multiplicative}, FR-INR \cite{shi2024improved} and LRTFR \cite{luo2024low}); for data beyond mesh grids, since traditional tensor methods cannot handle them, the competing methods only include continuous representation methods (SIREN \cite{sitzmann2020implicit}, MFN \cite{fathony2021multiplicative}, FR-INR \cite{shi2024improved} and LRTFR \cite{luo2024low}).
In terms of implementation, TR-ALS \cite{wang2017efficient} is coded in MATLAB, and the other methods are implemented based on the PyTorch library within the Anaconda environment.

The hyper-parameter configurations for all baseline methods strictly follow the settings reported in their original publications or the official source code provided by the authors.
For proposed NO-CTR, the hyper-parameters are adjusted based on the relevant literature \cite{sitzmann2020implicit,luo2024low} combined with extensive empirical tuning. Besides, considering the structure of multi-dimensional data and the limitation of computing power, in the experiments of this paper, we set all the continuous and nonlinear mode-$n$ operators along the spatial modes as identity operators.
During the training loop, quantitative evaluations are conducted every $100$ iterations by comparing the recovered output with the ground truth, and the checkpoint that yields the optimal performance is selected as the final result.

For an objective and quantitative comparison, four widely adopted metrics are employed to evaluate the reconstruction quality: peak signal-to-noise ratio (PSNR), structural similarity (SSIM) \cite{wang2004image}, normalized root mean square error (NRMSE), and coefficient of determination ($R^2$) as quality metrics. Generally, superior reconstruction quality is indicated by higher PSNR, SSIM, and $R^2$ values, as well as a lower NRMSE value.

\subsection{Multi-Spectral Images}

\begin{table*}[!tp]
	\caption{Quantitative Recovery Results of MSI Completion} \label{tab:msi}
	\begin{tabular*}{\linewidth}{CCCCCCCCCCCCCC@{}}
		\toprule
		\multicolumn{2}{c}{Sampling Rate} & \multicolumn{3}{c}{5\%} & \multicolumn{3}{c}{10\%} & \multicolumn{3}{c}{15\%} & \multicolumn{3}{c}{20\%} \\
		\cmidrule{1-2} \cmidrule{3-5} \cmidrule{6-8} \cmidrule{9-11} \cmidrule{12-14}
		MSI & Method & PSNR & SSIM & NRMSE & PSNR & SSIM & NRMSE & PSNR & SSIM & NRMSE & PSNR & SSIM & NRMSE \\
		\midrule
		\multirow{6}{*}{\includegraphics[height=6em]{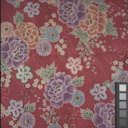}}
		& Observation & 15.532 & 0.045 & 0.274 & 15.769 & 0.078 & 0.266 & 16.023 & 0.112 & 0.259 & 16.277 & 0.148 & 0.251 \\
		& TR-ALS & 26.918 & 0.672 & 0.074 & 30.129 & 0.841 & 0.051 & 31.514 & 0.879 & 0.043 & 31.977 & 0.890 & 0.041 \\
		& SIREN & 29.737 & 0.842 & 0.053 & 34.377 & 0.946 & 0.031 & 37.860 & 0.982 & 0.021 & 41.742 & 0.992 & 0.013 \\
		& MFN & 28.507 & 0.806 & 0.061 & 31.843 & 0.907 & 0.042 & 36.741 & 0.975 & 0.024 & 40.022 & 0.989 & 0.016 \\
		& FR-INR & \underline{32.099} & \underline{0.918} & \underline{0.041} & \underline{37.273} & \underline{0.977} & \underline{0.022} & \underline{41.133} & \underline{0.991} & \underline{0.014} & \underline{43.701} & \underline{0.995} & \underline{0.011} \\
		& LRTFR & 30.631 & 0.859 & 0.048 & 34.300 & 0.946 & 0.032 & 37.817 & 0.977 & 0.021 & 40.461 & 0.988 & 0.016 \\
		\textit{Cloth} & NO-CTR & \textbf{33.906} & \textbf{0.943} & \textbf{0.033} & \textbf{39.188} & \textbf{0.985} & \textbf{0.018} & \textbf{42.793} & \textbf{0.994} & \textbf{0.012} & \textbf{45.579} & \textbf{0.997} & \textbf{0.009} \\
		\midrule
		\multirow{6}{*}{\includegraphics[height=6em]{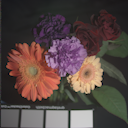}}
		& Observation & 16.269 & 0.372 & 0.168 & 16.497 & 0.403 & 0.163 & 16.744 & 0.436 & 0.159 & 17.010 & 0.468 & 0.154 \\
		& TR-ALS & 27.399 & 0.729 & 0.047 & 31.348 & 0.873 & 0.030 & 34.373 & 0.921 & 0.021 & 35.974 & 0.933 & 0.017 \\
		& SIREN & 33.706 & 0.925 & 0.023 & 38.681 & 0.977 & 0.013 & 41.702 & 0.989 & 0.009 & 46.041 & \underline{0.995} & 0.005 \\
		& MFN & 29.564 & 0.823 & 0.036 & 35.092 & 0.943 & 0.019 & 39.800 & 0.977 & 0.011 & 43.464 & 0.989 & 0.007 \\
		& FR-INR & \underline{34.713} & \underline{0.941} & \underline{0.020} & \underline{39.783} & \underline{0.981} & \underline{0.011} & \underline{43.309} & \underline{0.991} & \underline{0.007} & \underline{46.686} & 0.994 & \underline{0.005} \\
		& LRTFR & 33.619 & 0.930 & 0.023 & 36.879 & 0.965 & 0.016 & 40.100 & 0.984 & 0.011 & 43.249 & 0.991 & 0.008 \\
		\textit{Flowers} & NO-CTR & \textbf{37.121} & \textbf{0.981} & \textbf{0.015} & \textbf{42.591} & \textbf{0.994} & \textbf{0.008} & \textbf{46.456} & \textbf{0.997} & \textbf{0.005} & \textbf{49.717} & \textbf{0.999} & \textbf{0.004} \\
		\midrule
		\multirow{6}{*}{\includegraphics[height=6em]{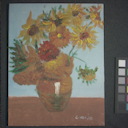}}
		& Observation & 16.574 & 0.105 & 0.258 & 16.810 & 0.141 & 0.251 & 17.059 & 0.177 & 0.244 & 17.328 & 0.214 & 0.237 \\
		& TR-ALS & 29.634 & 0.789 & 0.057 & 33.219 & 0.892 & 0.038 & 35.558 & 0.928 & 0.029 & 36.321 & 0.938 & 0.027 \\
		& SIREN & 34.747 & 0.918 & 0.032 & 39.731 & 0.973 & 0.018 & 43.023 & 0.986 & 0.012 & 46.164 & 0.994 & 0.009 \\
		& MFN & 32.563 & 0.863 & 0.041 & 37.103 & 0.943 & 0.024 & 41.731 & 0.980 & 0.014 & 44.520 & 0.990 & 0.010 \\
		& FR-INR & \underline{35.374} & \underline{0.934} & \underline{0.030} & \underline{40.407} & \underline{0.977} & \underline{0.017} & \underline{43.850} & \underline{0.989} & \underline{0.011} & \underline{46.953} & \underline{0.994} & \underline{0.008} \\
		& LRTFR & 34.597 & 0.925 & 0.032 & 38.807 & 0.967 & 0.020 & 41.587 & 0.982 & 0.014 & 44.254 & 0.989 & 0.011 \\
		\textit{Painting} & NO-CTR & \textbf{38.147} & \textbf{0.972} & \textbf{0.022} & \textbf{42.971} & \textbf{0.990} & \textbf{0.012} & \textbf{46.456} & \textbf{0.995} & \textbf{0.008} & \textbf{49.157} & \textbf{0.998} & \textbf{0.006} \\
		\midrule
		\multirow{6}{*}{\includegraphics[height=6em]{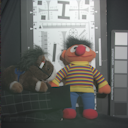}}
		& Observation & 11.332 & 0.241 & 0.278 & 11.573 & 0.277 & 0.270 & 11.826 & 0.314 & 0.263 & 12.085 & 0.351 & 0.255 \\
		& TR-ALS & 25.245 & 0.746 & 0.056 & 29.733 & 0.881 & 0.033 & 33.762 & 0.932 & 0.021 & 35.000 & 0.943 & 0.018 \\
		& SIREN & 31.916 & 0.915 & 0.026 & 37.259 & 0.974 & 0.014 & 42.048 & 0.988 & 0.008 & 45.306 & 0.995 & 0.006 \\
		& MFN & 28.157 & 0.802 & 0.040 & 34.213 & 0.922 & 0.020 & 38.931 & 0.969 & 0.012 & 42.759 & 0.987 & 0.007 \\
		& FR-INR & \underline{32.818} & 0.919 & \underline{0.023} & \underline{39.103} & 0.975 & \underline{0.011} & \underline{43.359} & \underline{0.989} & \underline{0.007} & \underline{46.464} & \underline{0.996} & \underline{0.005} \\
		& LRTFR & 31.836 & \underline{0.928} & 0.026 & 37.661 & \underline{0.975} & 0.013 & 40.447 & 0.986 & 0.010 & 43.099 & 0.991 & 0.007 \\
		\textit{Toy} & NO-CTR & \textbf{35.156} & \textbf{0.980} & \textbf{0.018} & \textbf{42.294} & \textbf{0.995} & \textbf{0.008} & \textbf{46.493} & \textbf{0.998} & \textbf{0.005} & \textbf{48.878} & \textbf{0.999} & \textbf{0.004} \\
		\bottomrule
	\end{tabular*}
	The \textbf{best} and \underline{second-best} results are highlighted.
\end{table*}

\begin{figure*}[!tp]
	\footnotesize
	\begin{tabular*}{\linewidth}{CCCCCCCC@{}}
		Observation & TR-ALS & SIREN & MFN & FR-INR & LRTFR & NO-CTR & Ground Truth \\
		\includegraphics[width=\linewidth/8-0.5ex]{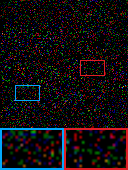} &
		\includegraphics[width=\linewidth/8-0.5ex]{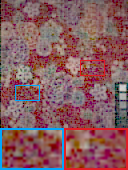} &
		\includegraphics[width=\linewidth/8-0.5ex]{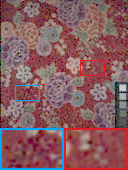} &
		\includegraphics[width=\linewidth/8-0.5ex]{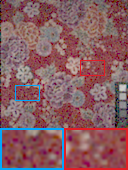} &
		\includegraphics[width=\linewidth/8-0.5ex]{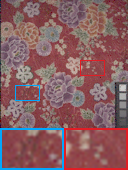} &
		\includegraphics[width=\linewidth/8-0.5ex]{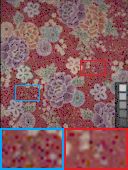} &
		\includegraphics[width=\linewidth/8-0.5ex]{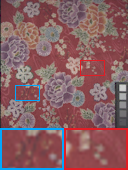} &
		\includegraphics[width=\linewidth/8-0.5ex]{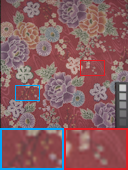} \\
		\includegraphics[width=\linewidth/8-0.5ex]{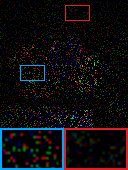} &
		\includegraphics[width=\linewidth/8-0.5ex]{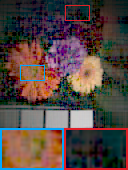} &
		\includegraphics[width=\linewidth/8-0.5ex]{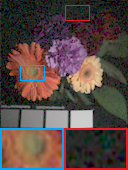} &
		\includegraphics[width=\linewidth/8-0.5ex]{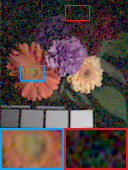} &
		\includegraphics[width=\linewidth/8-0.5ex]{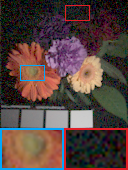} &
		\includegraphics[width=\linewidth/8-0.5ex]{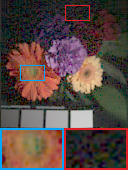} &
		\includegraphics[width=\linewidth/8-0.5ex]{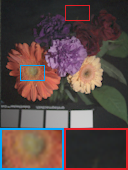} &
		\includegraphics[width=\linewidth/8-0.5ex]{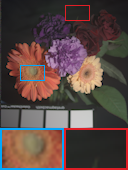} \\
		\includegraphics[width=\linewidth/8-0.5ex]{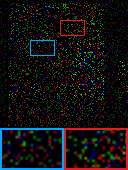} &
		\includegraphics[width=\linewidth/8-0.5ex]{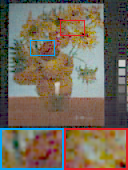} &
		\includegraphics[width=\linewidth/8-0.5ex]{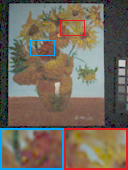} &
		\includegraphics[width=\linewidth/8-0.5ex]{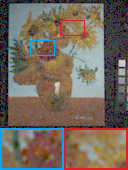} &
		\includegraphics[width=\linewidth/8-0.5ex]{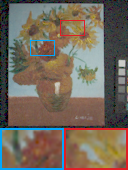} &
		\includegraphics[width=\linewidth/8-0.5ex]{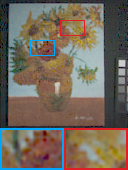} &
		\includegraphics[width=\linewidth/8-0.5ex]{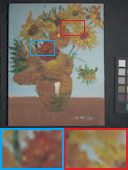} &
		\includegraphics[width=\linewidth/8-0.5ex]{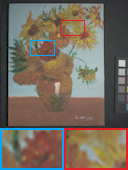} \\
		\includegraphics[width=\linewidth/8-0.5ex]{./pic/toy_obs.png} &
		\includegraphics[width=\linewidth/8-0.5ex]{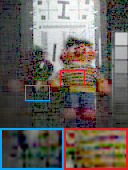} &
		\includegraphics[width=\linewidth/8-0.5ex]{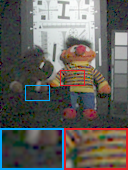} &
		\includegraphics[width=\linewidth/8-0.5ex]{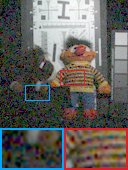} &
		\includegraphics[width=\linewidth/8-0.5ex]{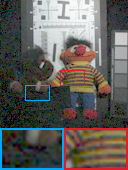} &
		\includegraphics[width=\linewidth/8-0.5ex]{./pic/toy_lrtfr.png} &
		\includegraphics[width=\linewidth/8-0.5ex]{./pic/toy_ours.png} &
		\includegraphics[width=\linewidth/8-0.5ex]{./pic/toy_gt.png}
	\end{tabular*}
	\caption{Visual recovery results of MSI completion at a sampling rate of $10\%$.} \label{fig:msi}
\end{figure*}

First, we conduct experiments on traditional and classic multi-dimensional data on regular mesh grids: multi-spectral images (MSI).
We use a database \cite{yasuma2008generalized} that spans a wide variety of real world materials.
Each multi-dimensional image is resized to $128 \times 128$ pixels with $31$ bands.

Table \ref{tab:msi} lists the quantitative recovery results of MSI completion. Proposed NO-CTR demonstrates superior performance across all experimental settings: it achieves the highest PSNR and SSIM values among all competing methods for all MSIs and all sampling rates. This advantage is consistent at low sampling rates, and becomes more pronounced as the sampling rates increase.
Figure \ref{fig:msi} shows the visual recovery results of MSI completion when the sampling rates are $10\%$. Proposed NO-CTR demonstrates remarkable performance in recovering fine details and textures. Compared with other methods, the results recovered by proposed NO-CTR are much closer to the ground truth.

\subsection{Color Videos}

\begin{table*}[!tp]
	\caption{Quantitative Recovery Results of Color Video Completion} \label{tab:color video}
	\begin{tabular*}{\linewidth}{CCCCCCCCCCCCCC@{}}
		\toprule
		\multicolumn{2}{c}{Sampling Rate} & \multicolumn{3}{c}{5\%} & \multicolumn{3}{c}{10\%} & \multicolumn{3}{c}{15\%} & \multicolumn{3}{c}{20\%} \\
		\cmidrule{1-2} \cmidrule{3-5} \cmidrule{6-8} \cmidrule{9-11} \cmidrule{12-14}
		Color Video & Method & PSNR & SSIM & NRMSE & PSNR & SSIM & NRMSE & PSNR & SSIM & NRMSE & PSNR & SSIM & NRMSE \\
		\midrule
		\multirow{6}{*}{\includegraphics[height=6em]{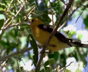}}
		& Observation & 6.947 & 0.015 & 0.436 & 7.180 & 0.032 & 0.425 & 7.434 & 0.056 & 0.412 & 7.684 & 0.081 & 0.401 \\
		& TR-ALS & 16.595 & 0.509 & 0.144 & 19.223 & 0.697 & 0.106 & 21.363 & 0.803 & 0.083 & 22.811 & 0.850 & 0.070 \\
		& SIREN & 18.877 & 0.690 & 0.110 & 21.631 & 0.825 & 0.080 & 23.401 & 0.875 & 0.066 & \underline{24.950} & \underline{0.914} & \underline{0.055} \\
		& MFN & 17.209 & 0.583 & 0.134 & 19.180 & 0.704 & 0.107 & 21.312 & 0.807 & 0.083 & 23.153 & 0.873 & 0.068 \\
		& FR-INR & \underline{19.337} & \underline{0.723} & \underline{0.105} & \underline{21.763} & \underline{0.826} & \underline{0.079} & \underline{23.519} & \underline{0.880} & \underline{0.065} & 24.737 & 0.904 & 0.056 \\
		& LRTFR & 18.340 & 0.661 & 0.118 & 20.705 & 0.777 & 0.090 & 22.437 & 0.850 & 0.073 & 23.423 & 0.870 & 0.065 \\
		\textit{Bird} & NO-CTR & \textbf{19.643} & \textbf{0.765} & \textbf{0.101} & \textbf{22.030} & \textbf{0.860} & \textbf{0.077} & \textbf{24.026} & \textbf{0.908} & \textbf{0.061} & \textbf{25.382} & \textbf{0.933} & \textbf{0.052} \\
		\midrule
		\multirow{6}{*}{\includegraphics[height=6em]{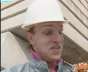}}
		& Observation & 3.895 & 0.007 & 0.676 & 4.128 & 0.013 & 0.658 & 4.375 & 0.019 & 0.639 & 4.641 & 0.025 & 0.620 \\
		& TR-ALS & 22.867 & 0.692 & 0.076 & 25.656 & 0.803 & 0.055 & 27.461 & 0.852 & 0.045 & 28.990 & 0.890 & 0.038 \\
		& SIREN & 24.450 & 0.805 & 0.063 & 27.119 & \underline{0.879} & 0.047 & 29.072 & 0.908 & 0.037 & 30.715 & 0.931 & 0.031 \\
		& MFN & 22.931 & 0.717 & 0.076 & 25.043 & 0.787 & 0.059 & 26.719 & 0.848 & 0.049 & 29.456 & 0.900 & 0.036 \\
		& FR-INR & \underline{24.955} & \underline{0.816} & \underline{0.060} & \underline{27.665} & 0.878 & \underline{0.044} & \underline{29.711} & \underline{0.910} & \underline{0.035} & \underline{31.421} & \underline{0.935} & \underline{0.028} \\
		& LRTFR & 24.712 & 0.789 & 0.062 & 27.267 & 0.862 & 0.046 & 28.818 & 0.895 & 0.038 & 29.853 & 0.914 & 0.034 \\
		\textit{Foreman} & NO-CTR & \textbf{26.759} & \textbf{0.891} & \textbf{0.049} & \textbf{29.719} & \textbf{0.940} & \textbf{0.035} & \textbf{31.841} & \textbf{0.957} & \textbf{0.027} & \textbf{32.940} & \textbf{0.967} & \textbf{0.024} \\
		\midrule
		\multirow{6}{*}{\includegraphics[height=6em]{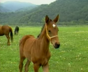}}
		& Observation & 7.219 & 0.011 & 0.425 & 7.451 & 0.025 & 0.414 & 7.699 & 0.043 & 0.402 & 7.978 & 0.063 & 0.389 \\
		& TR-ALS & 23.944 & 0.882 & 0.062 & 26.024 & 0.918 & 0.049 & 28.047 & 0.946 & 0.039 & 29.172 & 0.954 & 0.034 \\
		& SIREN & 25.705 & 0.911 & 0.051 & 27.739 & 0.942 & 0.040 & 29.193 & 0.957 & 0.034 & 30.444 & 0.964 & 0.029 \\
		& MFN & 24.984 & 0.899 & 0.055 & 26.897 & 0.927 & 0.044 & 28.632 & 0.944 & 0.036 & 30.012 & 0.964 & 0.031 \\
		& FR-INR & 26.384 & 0.928 & 0.047 & 28.326 & 0.946 & 0.037 & 29.831 & 0.958 & 0.031 & \underline{31.117} & 0.970 & \underline{0.027} \\
		& LRTFR & \underline{26.766} & \underline{0.932} & \underline{0.045} & \underline{28.622} & \underline{0.954} & \underline{0.036} & \underline{30.228} & \underline{0.966} & \underline{0.030} & 30.983 & \underline{0.971} & 0.028 \\
		\textit{Horse} & NO-CTR & \textbf{27.459} & \textbf{0.949} & \textbf{0.041} & \textbf{29.504} & \textbf{0.966} & \textbf{0.033} & \textbf{30.849} & \textbf{0.976} & \textbf{0.028} & \textbf{32.075} & \textbf{0.981} & \textbf{0.024} \\
		\bottomrule
	\end{tabular*}
	The \textbf{best} and \underline{second-best} results are highlighted.
\end{table*}

\begin{figure*}[!tp]
	\footnotesize
	\begin{tabular*}{\linewidth}{CCCCCCCC@{}}
		Observation & TR-ALS & SIREN & MFN & FR-INR & LRTFR & NO-CTR & Ground Truth \\
		\includegraphics[width=\linewidth/8-0.5ex]{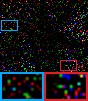} &
		\includegraphics[width=\linewidth/8-0.5ex]{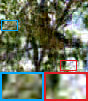} &
		\includegraphics[width=\linewidth/8-0.5ex]{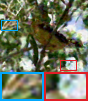} &
		\includegraphics[width=\linewidth/8-0.5ex]{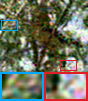} &
		\includegraphics[width=\linewidth/8-0.5ex]{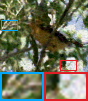} &
		\includegraphics[width=\linewidth/8-0.5ex]{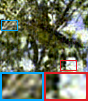} &
		\includegraphics[width=\linewidth/8-0.5ex]{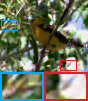} &
		\includegraphics[width=\linewidth/8-0.5ex]{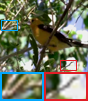} \\
		\includegraphics[width=\linewidth/8-0.5ex]{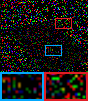} &
		\includegraphics[width=\linewidth/8-0.5ex]{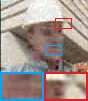} &
		\includegraphics[width=\linewidth/8-0.5ex]{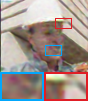} &
		\includegraphics[width=\linewidth/8-0.5ex]{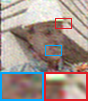} &
		\includegraphics[width=\linewidth/8-0.5ex]{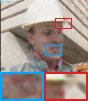} &
		\includegraphics[width=\linewidth/8-0.5ex]{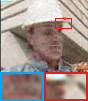} &
		\includegraphics[width=\linewidth/8-0.5ex]{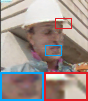} &
		\includegraphics[width=\linewidth/8-0.5ex]{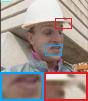} \\
		\includegraphics[width=\linewidth/8-0.5ex]{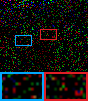} &
		\includegraphics[width=\linewidth/8-0.5ex]{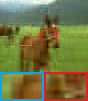} &
		\includegraphics[width=\linewidth/8-0.5ex]{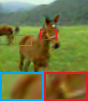} &
		\includegraphics[width=\linewidth/8-0.5ex]{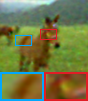} &
		\includegraphics[width=\linewidth/8-0.5ex]{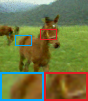} &
		\includegraphics[width=\linewidth/8-0.5ex]{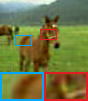} &
		\includegraphics[width=\linewidth/8-0.5ex]{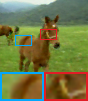} &
		\includegraphics[width=\linewidth/8-0.5ex]{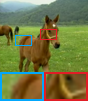}
	\end{tabular*}
	\caption{Visual recovery results of color video completion at a sampling rate of $10\%$.} \label{fig:color video}
\end{figure*}

Then, we conduct experiments on higher-order multi-dimensional data on regular mesh grids: color videos, which consist of a temporally ordered sequence of frames.
Each color video is resized to $144 \times 176$ pixels with three color channels (RGB) and consists of 30 consecutive frames.

Table \ref{tab:color video} lists the quantitative recovery results of color video completion. Proposed NO-CTR consistently outperforms all competing methods across all color videos and all sampling rates, achieving the highest PSNR and SSIM values in every experimental setting. This superiority is evident at high sampling rates, and it maintains this lead at low sampling rates.
Figure \ref{fig:color video} shows the visual recovery results of color video completion when the sampling rates are $10\%$. Proposed NO-CTR effectively restores the visual details of dynamic scenes. In terms of these color videos, proposed NO-CTR outperforms competing methods by recovering clearer textures and sharper edges, thus better retaining the visual information of the color videos.

\subsection{Sentinel-2 Images}

\begin{table*}[!tp]
	\caption{Quantitative Recovery Results of Sentinel-2 Image Completion} \label{tab:sentinel-2}
	\begin{tabular*}{\linewidth}{CCCCCCCCCCCCCC@{}}
		\toprule
		\multicolumn{2}{c}{Sampling Rate} & \multicolumn{3}{c}{5\%} & \multicolumn{3}{c}{10\%} & \multicolumn{3}{c}{15\%} & \multicolumn{3}{c}{20\%} \\
		\cmidrule{1-2} \cmidrule{3-5} \cmidrule{6-8} \cmidrule{9-11} \cmidrule{12-14}
		Dataset & Method & PSNR & SSIM & NRMSE & PSNR & SSIM & NRMSE & PSNR & SSIM & NRMSE & PSNR & SSIM & NRMSE \\
		\midrule
		\multirow{6}{*}{\includegraphics[height=6em]{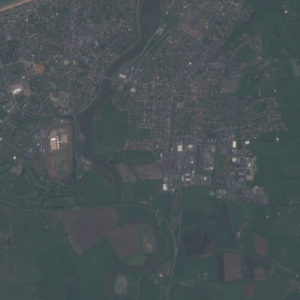}}
		& Observation & 9.512 & 0.015 & 0.454 & 9.746 & 0.022 & 0.443 & 9.998 & 0.027 & 0.430 & 10.267 & 0.031 & 0.417 \\
		& TR-ALS & 25.993 & 0.508 & 0.082 & 29.151 & 0.567 & 0.071 & 30.638 & 0.688 & 0.059 & 32.071 & 0.705 & 0.057 \\
		& SIREN & 32.382 & 0.715 & 0.056 & 33.999 & 0.770 & 0.049 & 35.197 & 0.834 & 0.038 & 36.280 & 0.862 & 0.036 \\
		& MFN & 32.202 & 0.734 & 0.058 & 33.814 & 0.800 & 0.046 & 34.967 & 0.848 & 0.039 & 35.904 & 0.871 & 0.034 \\
		& FR-INR & \textbf{33.243} & \underline{0.777} & \underline{0.050} & \underline{34.673} & \underline{0.833} & \underline{0.041} & \underline{35.288} & \underline{0.868} & \underline{0.035} & \underline{36.499} & \underline{0.890} & 0.031 \\
		& LRTFR & 30.534 & 0.706 & 0.060 & 32.039 & 0.809 & 0.043 & 33.240 & 0.861 & 0.036 & 34.227 & 0.882 & \underline{0.031} \\
		\textit{T30UYV} & NO-CTR & \underline{33.206} & \textbf{0.822} & \textbf{0.045} & \textbf{35.231} & \textbf{0.873} & \textbf{0.037} & \textbf{36.389} & \textbf{0.900} & \textbf{0.032} & \textbf{37.077} & \textbf{0.916} & \textbf{0.029} \\
		\midrule
		\multirow{6}{*}{\includegraphics[height=6em]{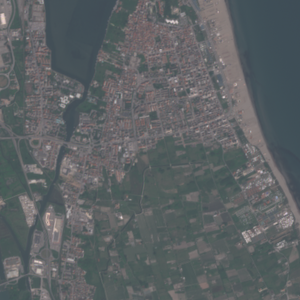}}
		& Observation & 8.587 & 0.014 & 0.485 & 8.819 & 0.019 & 0.472 & 9.078 & 0.024 & 0.458 & 9.340 & 0.028 & 0.445 \\
		& TR-ALS & 22.935 & 0.461 & 0.101 & 27.835 & 0.608 & 0.061 & 30.139 & 0.696 & 0.051 & 31.615 & 0.691 & 0.045 \\
		& SIREN & 32.530 & 0.698 & 0.052 & 33.839 & 0.790 & 0.039 & 34.834 & 0.821 & 0.036 & 36.026 & 0.858 & \underline{0.027} \\
		& MFN & 31.518 & 0.726 & 0.053 & 33.345 & 0.796 & 0.041 & 34.575 & 0.844 & 0.036 & 35.604 & 0.878 & 0.030 \\
		& FR-INR & \underline{32.974} & \underline{0.792} & \underline{0.043} & \underline{34.507} & \underline{0.849} & \underline{0.036} & \underline{35.381} & \underline{0.874} & \underline{0.031} & \underline{36.299} & \underline{0.891} & 0.028 \\
		& LRTFR & 30.111 & 0.745 & 0.047 & 31.740 & 0.824 & 0.037 & 33.043 & 0.859 & 0.033 & 33.971 & 0.884 & 0.029 \\
		\textit{T32TQR} & NO-CTR & \textbf{33.068} & \textbf{0.829} & \textbf{0.040} & \textbf{35.100} & \textbf{0.886} & \textbf{0.031} & \textbf{36.287} & \textbf{0.911} & \textbf{0.027} & \textbf{36.971} & \textbf{0.928} & \textbf{0.024} \\
		\midrule
		\multirow{6}{*}{\includegraphics[height=6em]{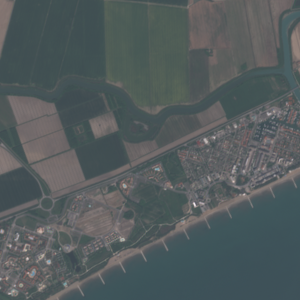}}
		& Observation & 8.883 & 0.015 & 0.479 & 9.118 & 0.021 & 0.467 & 9.357 & 0.026 & 0.454 & 9.633 & 0.032 & 0.440 \\
		& TR-ALS & 24.291 & 0.509 & 0.085 & 27.084 & 0.573 & 0.061 & 29.234 & 0.660 & 0.053 & 31.625 & 0.707 & 0.044 \\
		& SIREN & 32.547 & 0.765 & \underline{0.043} & 34.807 & 0.833 & \underline{0.033} & \underline{36.176} & 0.867 & \underline{0.027} & \underline{37.280} & 0.882 & \underline{0.023} \\
		& MFN & 31.761 & 0.769 & 0.050 & 33.921 & 0.839 & 0.038 & 34.898 & 0.869 & 0.033 & 36.537 & 0.901 & 0.027 \\
		& FR-INR & \underline{33.329} & \underline{0.820} & 0.043 & \underline{34.955} & \underline{0.871} & 0.033 & 35.812 & \underline{0.897} & 0.027 & 36.937 & 0.908 & 0.024 \\
		& LRTFR & 30.691 & 0.797 & 0.044 & 32.433 & 0.846 & 0.036 & 33.681 & 0.884 & 0.030 & 34.611 & \underline{0.909} & 0.026 \\
		\textit{T33TUL} & NO-CTR & \textbf{33.673} & \textbf{0.879} & \textbf{0.036} & \textbf{35.892} & \textbf{0.920} & \textbf{0.028} & \textbf{37.136} & \textbf{0.938} & \textbf{0.024} & \textbf{38.100} & \textbf{0.946} & \textbf{0.021} \\
		\bottomrule
	\end{tabular*}
	The \textbf{best} and \underline{second-best} results are highlighted.
\end{table*}

\begin{figure*}[!tp]
	\footnotesize
	\begin{tabular*}{\linewidth}{CCCCCCCC@{}}
		Observation & TR-ALS & SIREN & MFN & FR-INR & LRTFR & NO-CTR & Ground Truth \\
		\includegraphics[width=\linewidth/8-0.5ex]{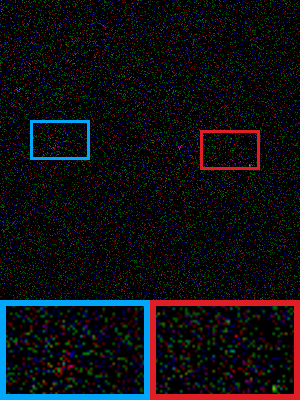} &
		\includegraphics[width=\linewidth/8-0.5ex]{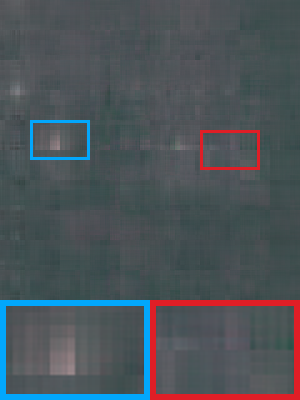} &
		\includegraphics[width=\linewidth/8-0.5ex]{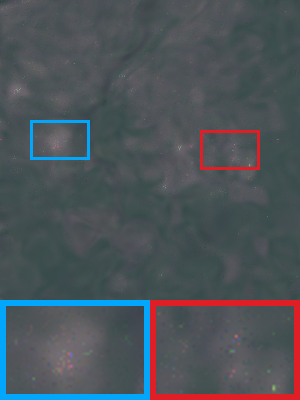} &
		\includegraphics[width=\linewidth/8-0.5ex]{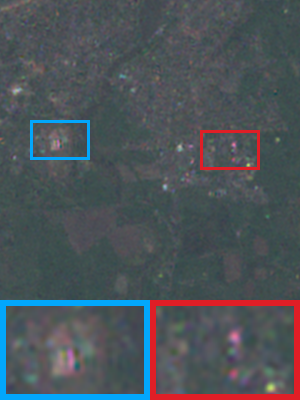} &
		\includegraphics[width=\linewidth/8-0.5ex]{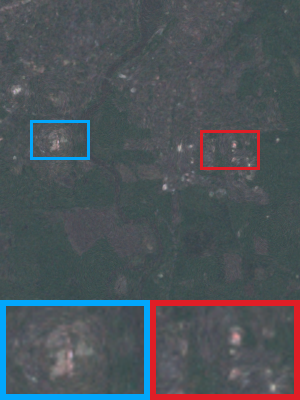} &
		\includegraphics[width=\linewidth/8-0.5ex]{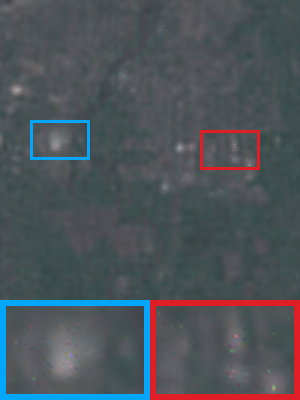} &
		\includegraphics[width=\linewidth/8-0.5ex]{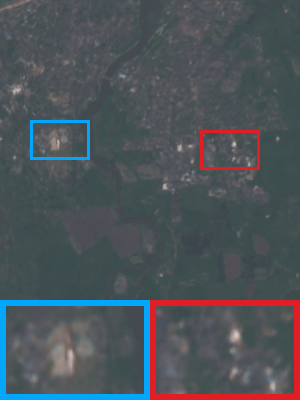} &
		\includegraphics[width=\linewidth/8-0.5ex]{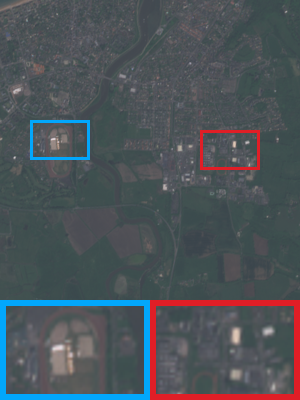} \\
		\includegraphics[width=\linewidth/8-0.5ex]{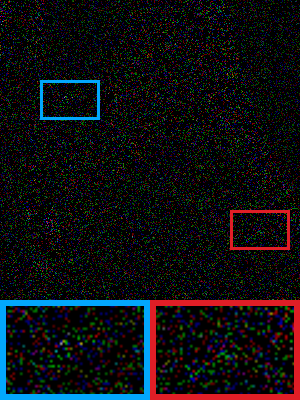} &
		\includegraphics[width=\linewidth/8-0.5ex]{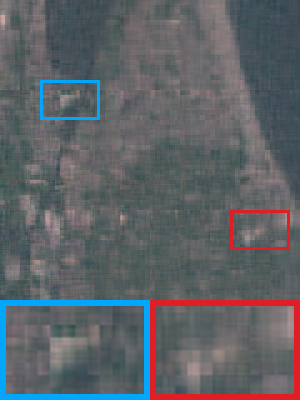} &
		\includegraphics[width=\linewidth/8-0.5ex]{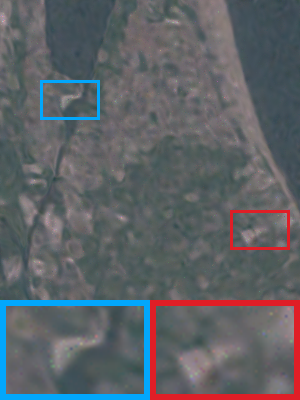} &
		\includegraphics[width=\linewidth/8-0.5ex]{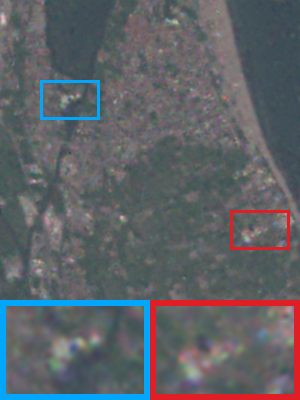} &
		\includegraphics[width=\linewidth/8-0.5ex]{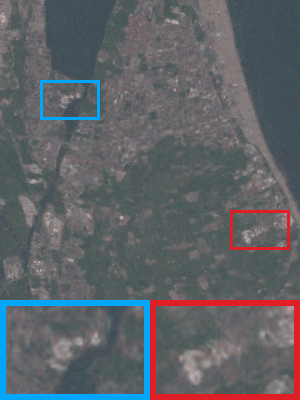} &
		\includegraphics[width=\linewidth/8-0.5ex]{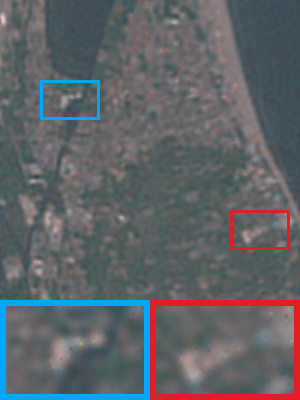} &
		\includegraphics[width=\linewidth/8-0.5ex]{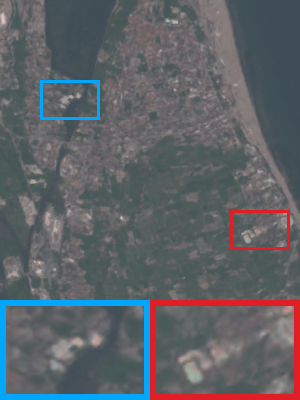} &
		\includegraphics[width=\linewidth/8-0.5ex]{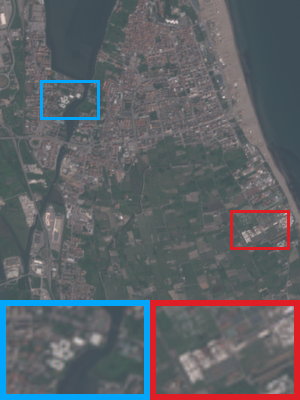} \\
		\includegraphics[width=\linewidth/8-0.5ex]{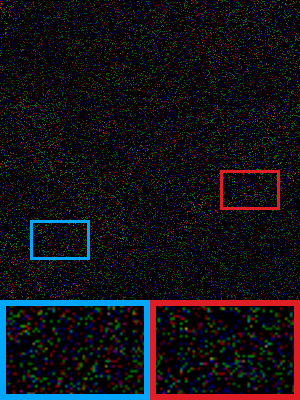} &
		\includegraphics[width=\linewidth/8-0.5ex]{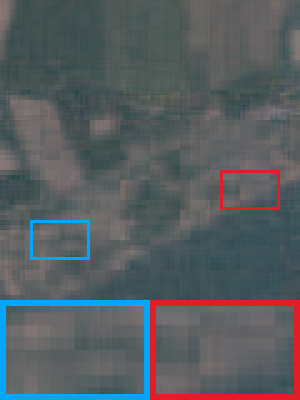} &
		\includegraphics[width=\linewidth/8-0.5ex]{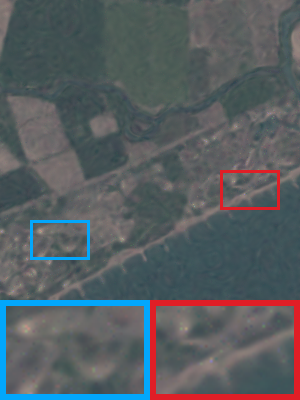} &
		\includegraphics[width=\linewidth/8-0.5ex]{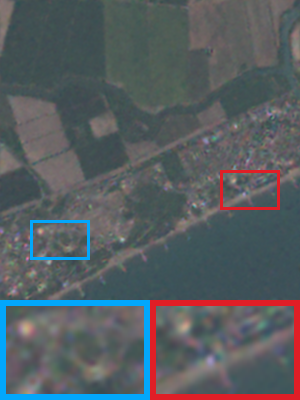} &
		\includegraphics[width=\linewidth/8-0.5ex]{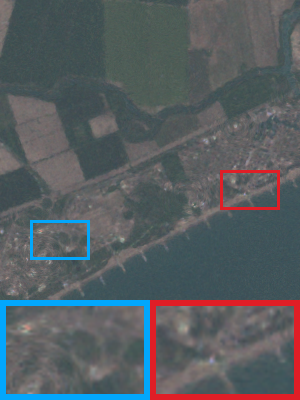} &
		\includegraphics[width=\linewidth/8-0.5ex]{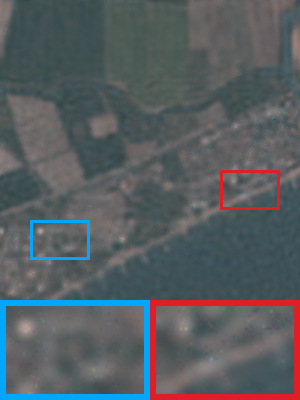} &
		\includegraphics[width=\linewidth/8-0.5ex]{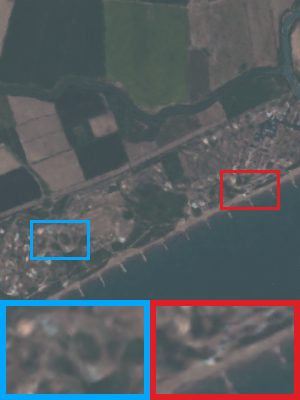} &
		\includegraphics[width=\linewidth/8-0.5ex]{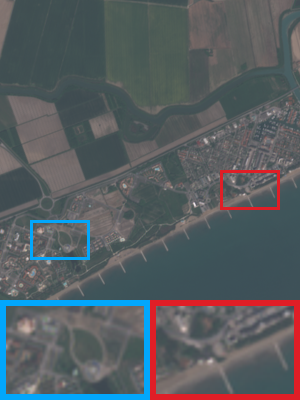}
	\end{tabular*}
	\caption{Visual recovery results of Sentinel-2 image completion at a sampling rate of $10\%$.} \label{fig:sentinel-2}
\end{figure*}

Next, we conduct experiments on multi-dimensional data on mesh grids with different spatial resolutions: Sentinel-2 images.
Sentinel-2 is an European mission that utilizes wide-swath, high-resolution, multi-spectral imaging.
It is equipped with an optical instrument payload that samples $13$ spectral bands, including: four bands at $10\,\mathrm{m}$, six bands at $20\,\mathrm{m}$, and three bands at $60\,\mathrm{m}$ spatial resolution. The corresponding images are cropped to $300 \times 300$ pixels, $150 \times 150$ pixels, and $50 \times 50$ pixels, respectively.

Table \ref{tab:sentinel-2} lists the quantitative recovery results of Sentinel-2 image completion. Proposed NO-CTR method exhibits outstanding performance across all three datasets and all evaluated sampling rates, outperforming competing methods in nearly all experimental settings. With a minor exception---at $5\%$ sampling rate on \textit{T30UYV}, FR-INR achieves a marginally higher PSNR. Notably, this superiority becomes more pronounced as the sampling rates increase, demonstrating its strong ability for Sentinel-2 image completion.
Figure \ref{fig:sentinel-2} shows the visual recovery results of Sentinel-2 image completion when the sampling rates are $10\%$. Proposed NO-CTR exhibits superior capability in recovering geographical details. When reconstructing satellite images, proposed NO-CTR recovers more accurate textures and boundaries for urban structures and land cover regions, which is vital for remote sensing analyses.

\subsection{Point Clouds}

\begin{table*}[!tp]
	\caption{Quantitative Recovery Results of Point Cloud Completion} \label{tab:pointcloud}
	\begin{tabular*}{\linewidth}{CCCCCCCCCC@{}}
		\toprule
		\multicolumn{2}{c}{Sampling Rate} & \multicolumn{2}{c}{5\%} & \multicolumn{2}{c}{10\%} & \multicolumn{2}{c}{15\%} & \multicolumn{2}{c}{20\%} \\
		\cmidrule{1-2} \cmidrule{3-4} \cmidrule{5-6} \cmidrule{7-8} \cmidrule{9-10}
		Point Cloud & Method & NRMSE & $R^2$ & NRMSE & $R^2$ & NRMSE & $R^2$ & NRMSE & $R^2$ \\
		\midrule
		\multirow{5}{*}{\includegraphics[height=6em]{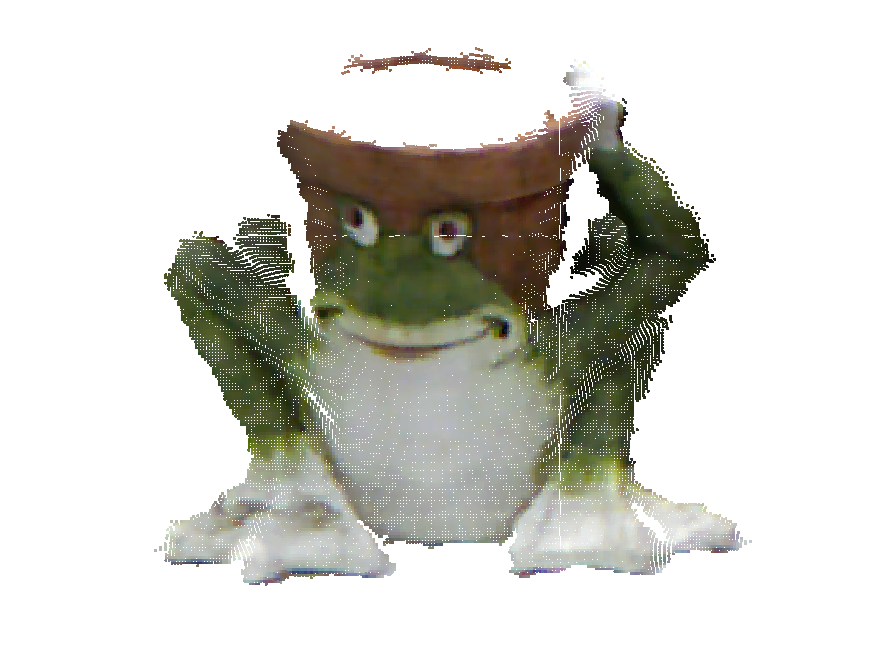}}
		& Observation & 0.470 & -3.120 & 0.457 & -2.894 & 0.443 & -2.662 & 0.432 & -2.484 \\
		& SIREN & \underline{0.054} & \underline{0.947} & \underline{0.044} & \underline{0.964} & \underline{0.039} & \underline{0.972} & \underline{0.031} & \underline{0.983} \\
		& MFN & 0.060 & 0.936 & 0.051 & 0.953 & 0.047 & 0.959 & 0.038 & 0.974 \\
		& FR-INR & 0.057 & 0.940 & 0.047 & 0.959 & 0.043 & 0.966 & 0.033 & 0.980 \\
		& LRTFR & 0.062 & 0.930 & 0.054 & 0.947 & 0.050 & 0.955 & 0.040 & 0.971 \\
		\textit{Frog} & NO-CTR & \textbf{0.051} & \textbf{0.953} & \textbf{0.041} & \textbf{0.969} & \textbf{0.034} & \textbf{0.980} & \textbf{0.028} & \textbf{0.985} \\
		\midrule
		\multirow{5}{*}{\includegraphics[height=6em]{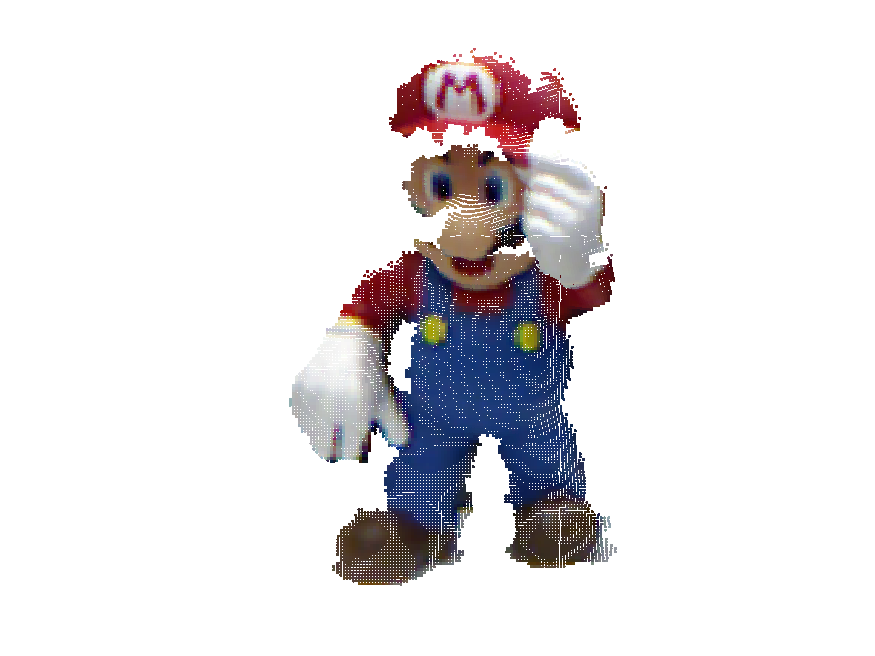}}
		& Observation & 0.435 & -1.939 & 0.423 & -1.788 & 0.412 & -1.637 & 0.398 & -1.469 \\
		& SIREN & \underline{0.088} & \underline{0.878} & \underline{0.063} & \underline{0.938} & \underline{0.058} & \underline{0.948} & \underline{0.044} & \underline{0.970} \\
		& MFN & 0.096 & 0.855 & 0.069 & 0.925 & 0.067 & 0.931 & 0.052 & 0.957 \\
		& FR-INR & 0.091 & 0.871 & 0.064 & 0.936 & 0.062 & 0.940 & 0.045 & 0.968 \\
		& LRTFR & 0.099 & 0.846 & 0.077 & 0.906 & 0.074 & 0.915 & 0.056 & 0.951 \\
		\textit{Mario} & NO-CTR & \textbf{0.083} & \textbf{0.892} & \textbf{0.061} & \textbf{0.942} & \textbf{0.055} & \textbf{0.952} & \textbf{0.043} & \textbf{0.971} \\
		\midrule
		\multirow{5}{*}{\includegraphics[height=6em]{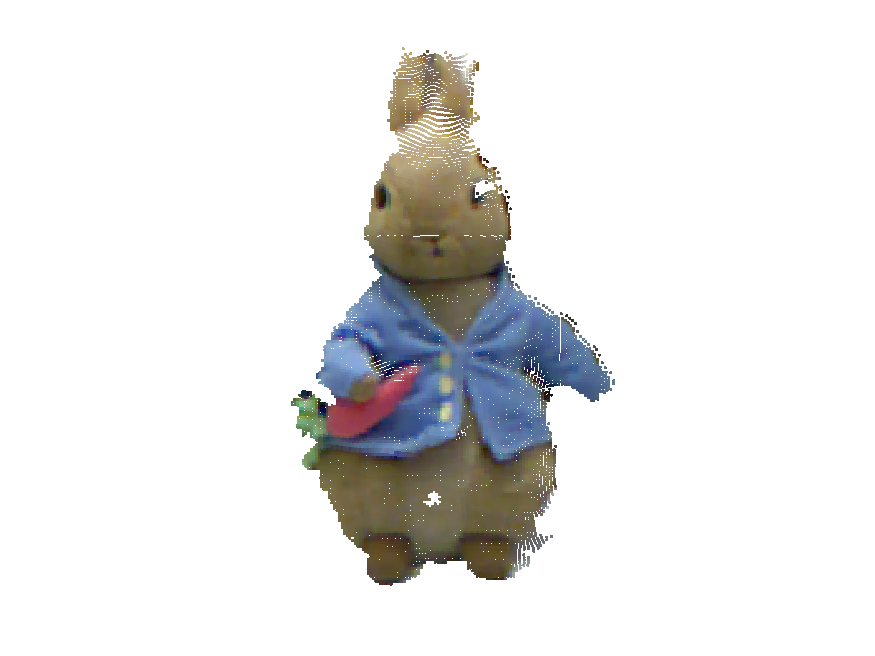}}
		& Observation & 0.465 & -8.562 & 0.451 & -8.003 & 0.439 & -7.546 & 0.427 & -7.077 \\
		& SIREN & \underline{0.082} & \underline{0.709} & \underline{0.069} & \underline{0.795} & \underline{0.056} & \underline{0.863} & \underline{0.045} & \underline{0.914} \\
		& MFN & 0.088 & 0.669 & 0.071 & 0.781 & 0.060 & 0.840 & 0.048 & 0.903 \\
		& FR-INR & 0.095 & 0.604 & 0.070 & 0.783 & 0.058 & 0.851 & 0.046 & 0.908 \\
		& LRTFR & 0.088 & 0.671 & 0.075 & 0.756 & 0.064 & 0.818 & 0.054 & 0.874 \\
		\textit{Rabbit} & NO-CTR & \textbf{0.077} & \textbf{0.740} & \textbf{0.062} & \textbf{0.831} & \textbf{0.047} & \textbf{0.906} & \textbf{0.040} & \textbf{0.930} \\
		\bottomrule
	\end{tabular*}
	The \textbf{best} and \underline{second-best} results are highlighted.
\end{table*}

\begin{figure*}[!tp]
	\footnotesize
	\begin{tabular*}{\linewidth}{CCCCCCC@{}}
		Observation & SIREN & MFN & FR-INR & LRTFR & NO-CTR & Ground Truth \\
		\includegraphics[width=\linewidth/7-0.5ex]{./pic/frog_obs.png} &
		\includegraphics[width=\linewidth/7-0.5ex]{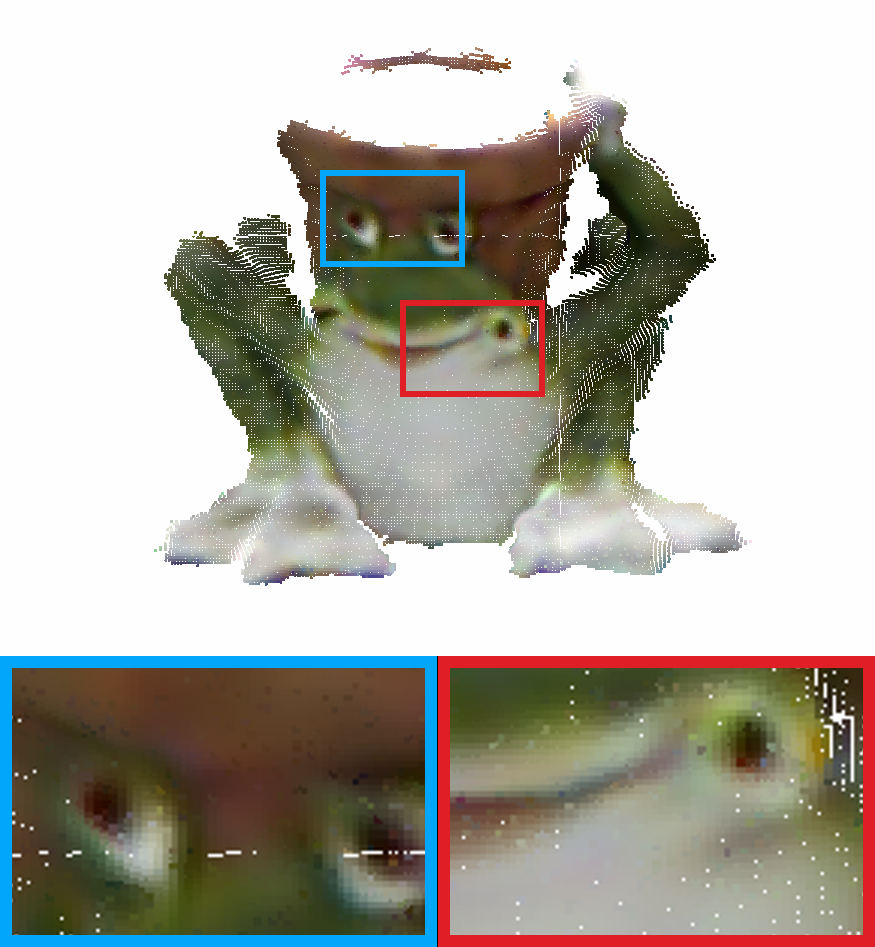} &
		\includegraphics[width=\linewidth/7-0.5ex]{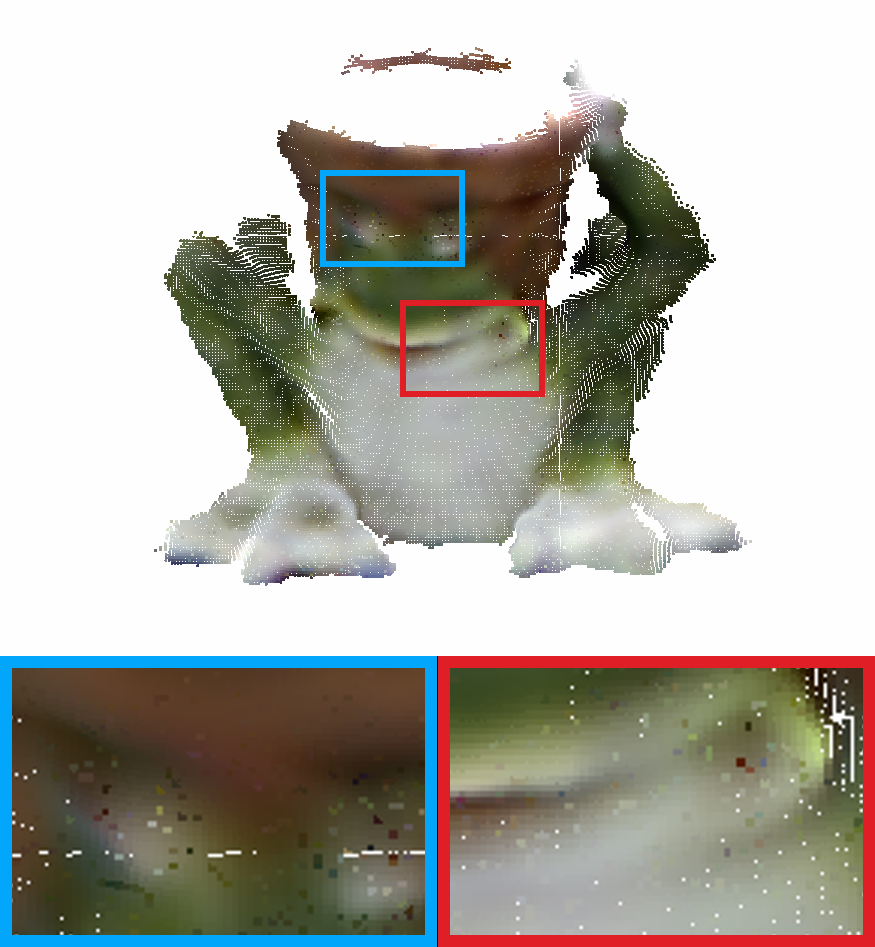} &
		\includegraphics[width=\linewidth/7-0.5ex]{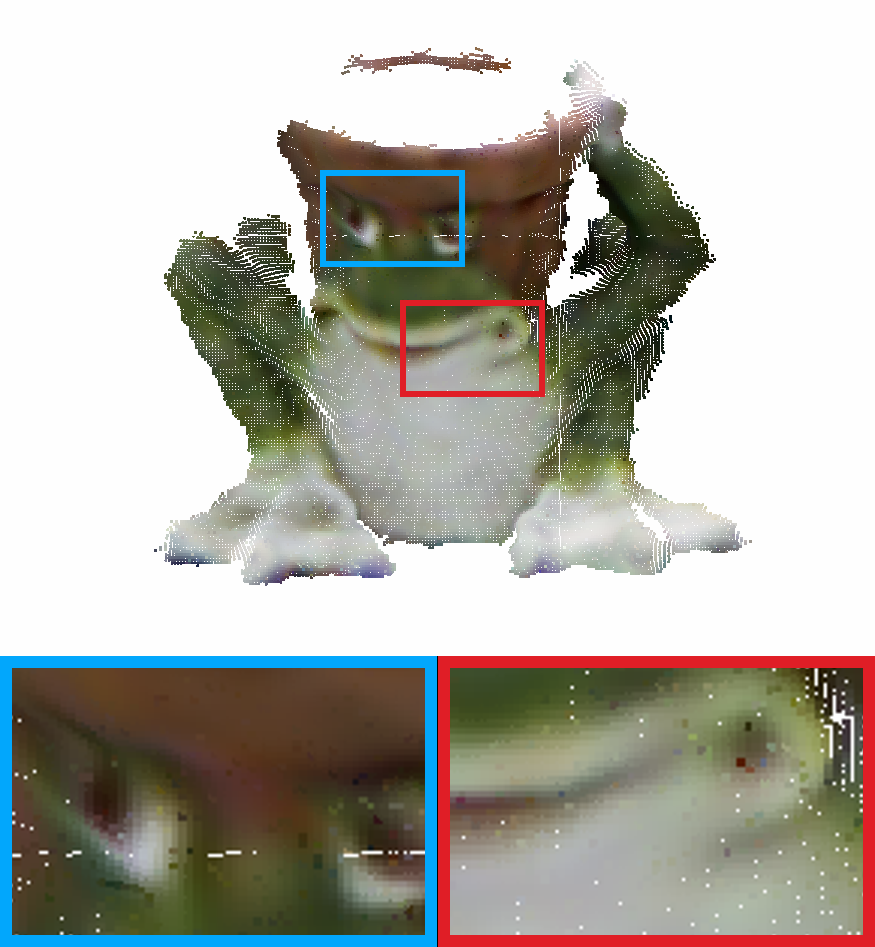} &
		\includegraphics[width=\linewidth/7-0.5ex]{./pic/frog_lrtfr.png} &
		\includegraphics[width=\linewidth/7-0.5ex]{./pic/frog_ours.png} &
		\includegraphics[width=\linewidth/7-0.5ex]{./pic/frog_gt.png} \\
		\includegraphics[width=\linewidth/7-0.5ex]{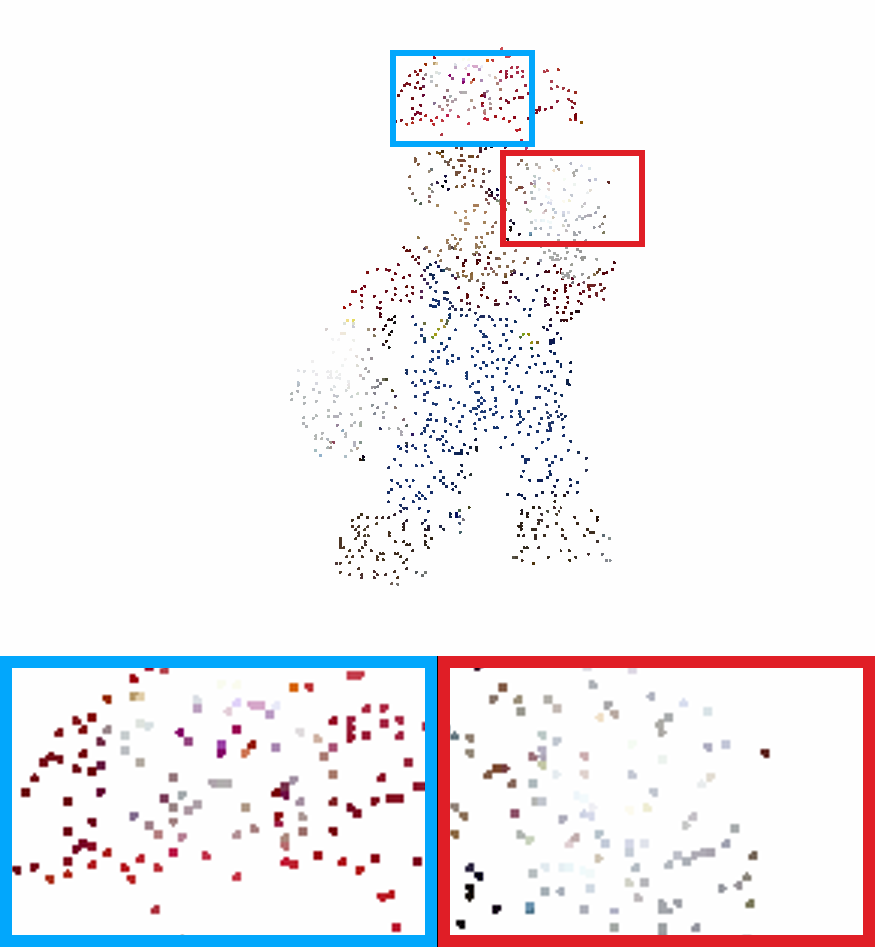} &
		\includegraphics[width=\linewidth/7-0.5ex]{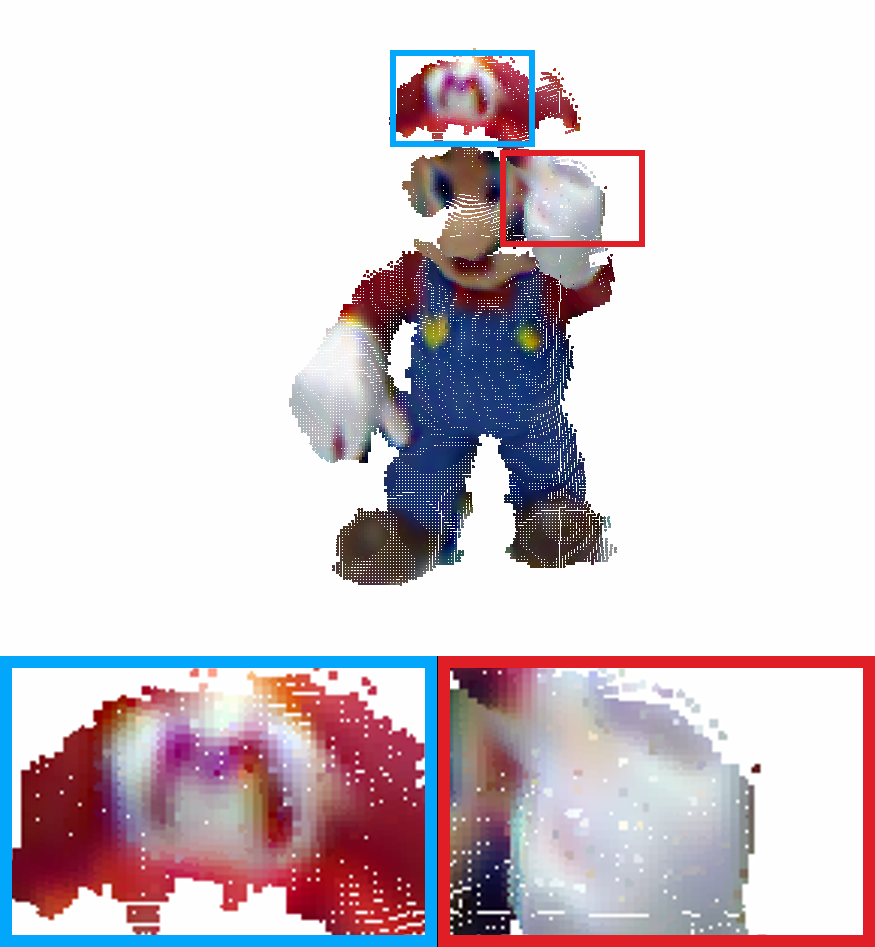} &
		\includegraphics[width=\linewidth/7-0.5ex]{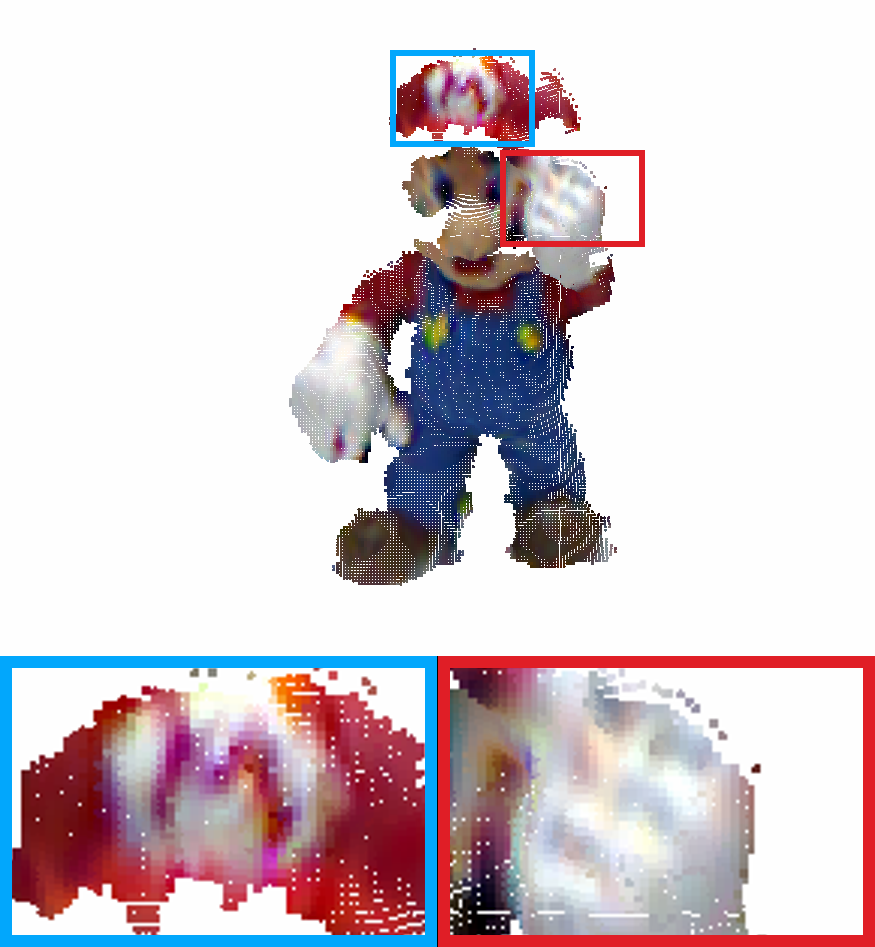} &
		\includegraphics[width=\linewidth/7-0.5ex]{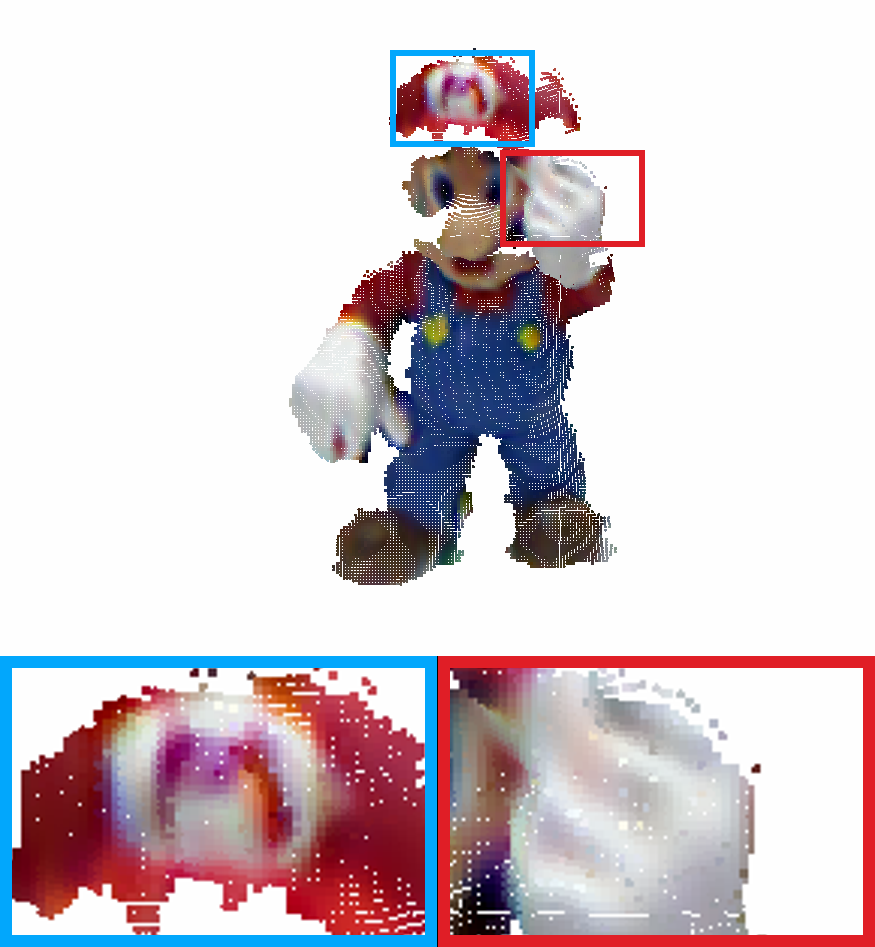} &
		\includegraphics[width=\linewidth/7-0.5ex]{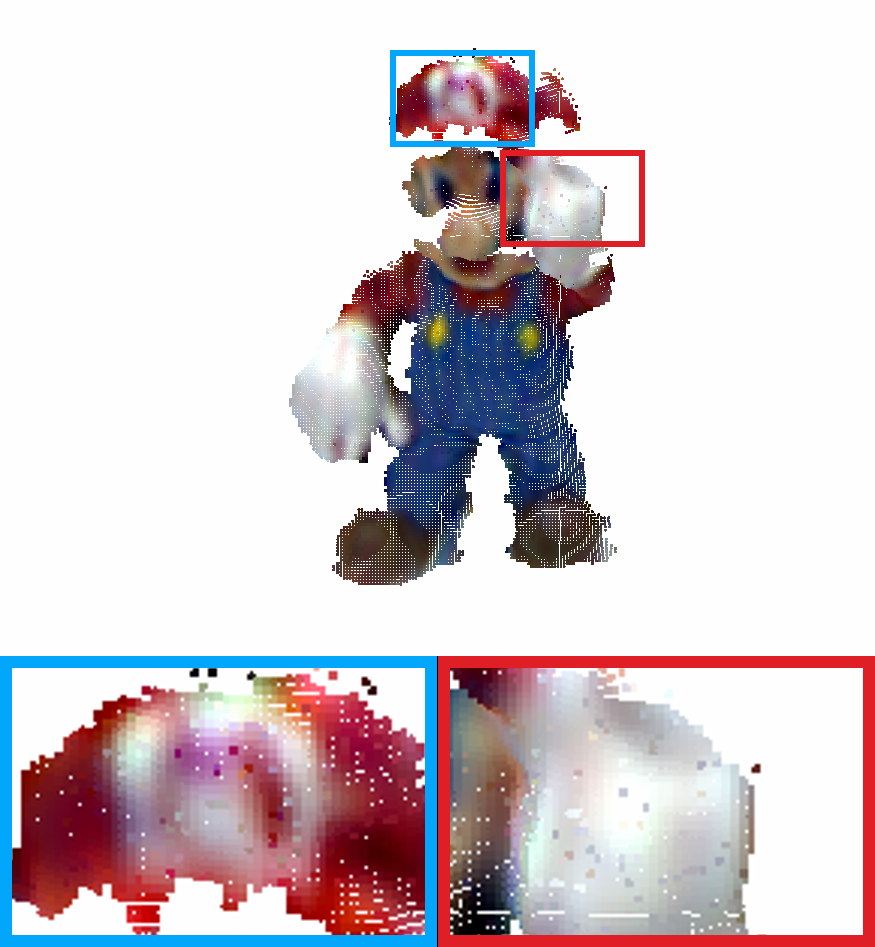} &
		\includegraphics[width=\linewidth/7-0.5ex]{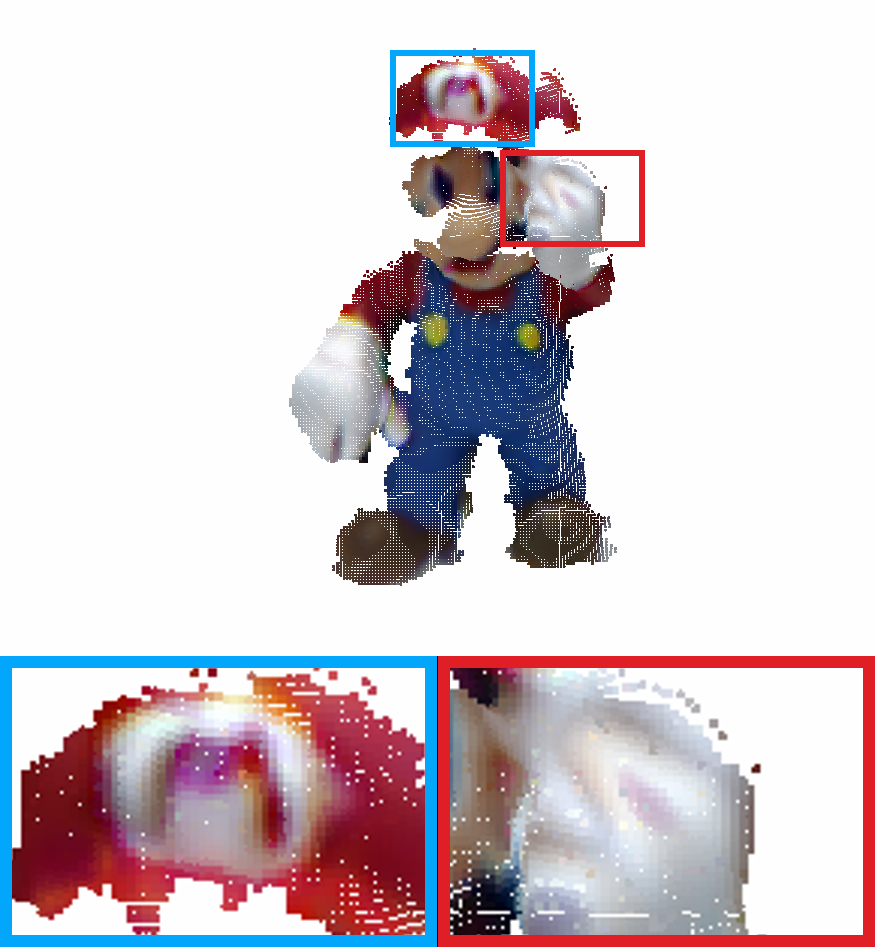} &
		\includegraphics[width=\linewidth/7-0.5ex]{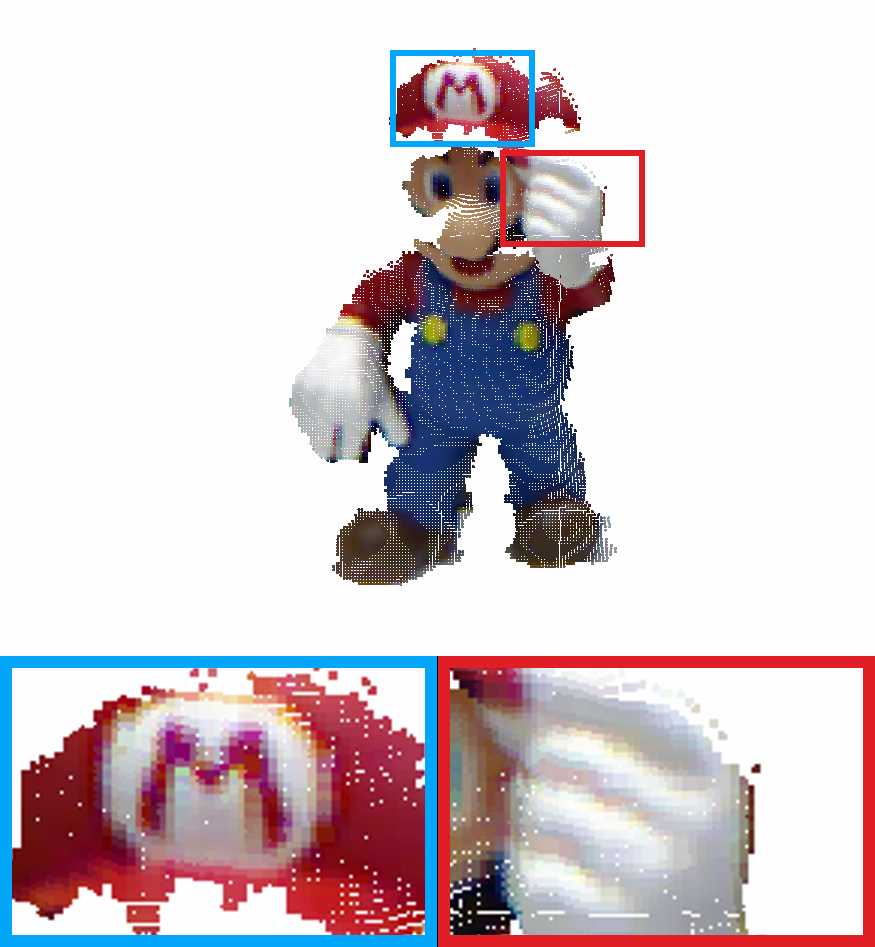} \\
		\includegraphics[width=\linewidth/7-0.5ex]{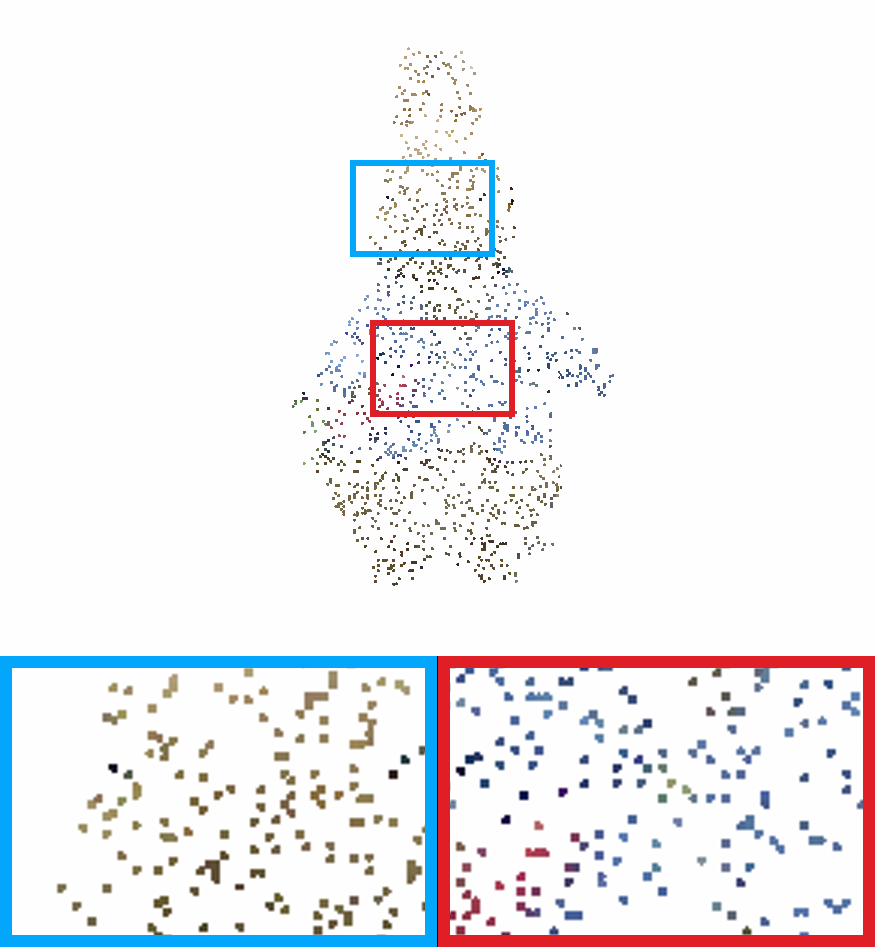} &
		\includegraphics[width=\linewidth/7-0.5ex]{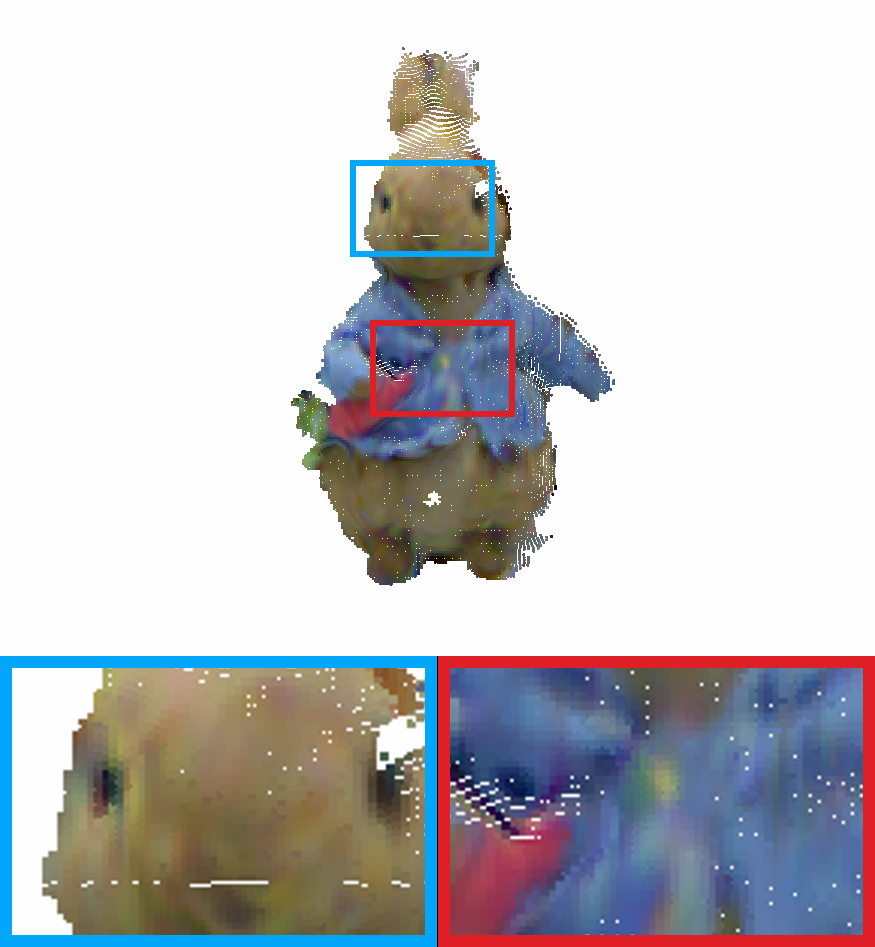} &
		\includegraphics[width=\linewidth/7-0.5ex]{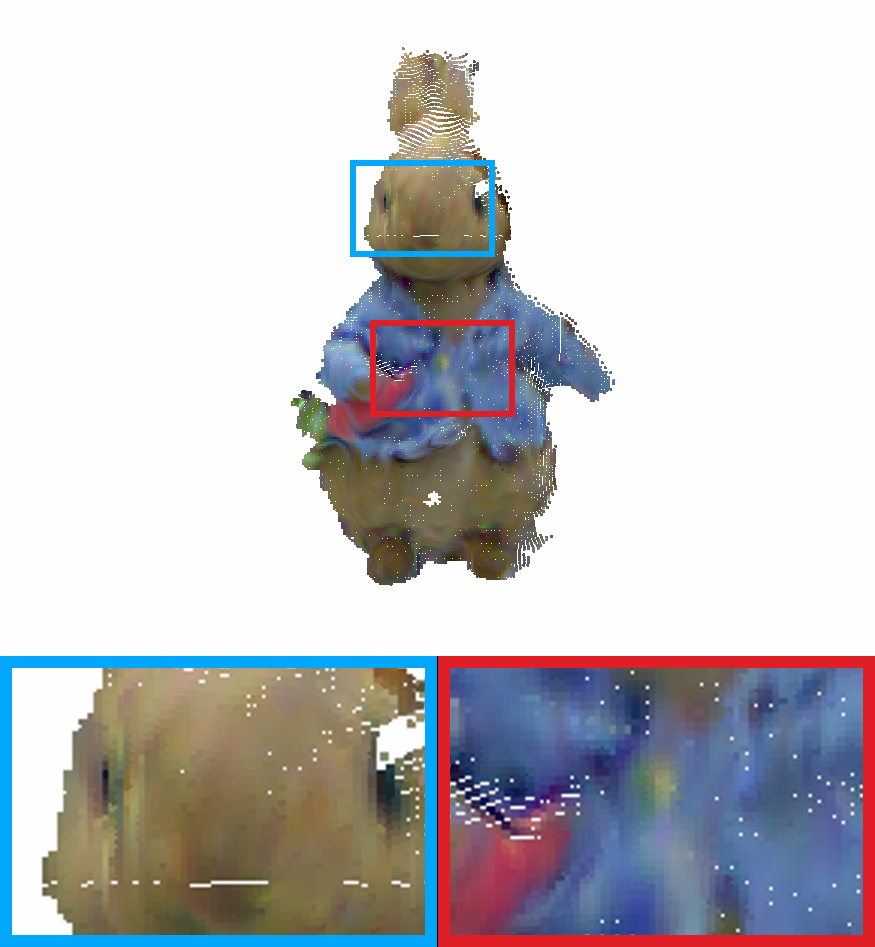} &
		\includegraphics[width=\linewidth/7-0.5ex]{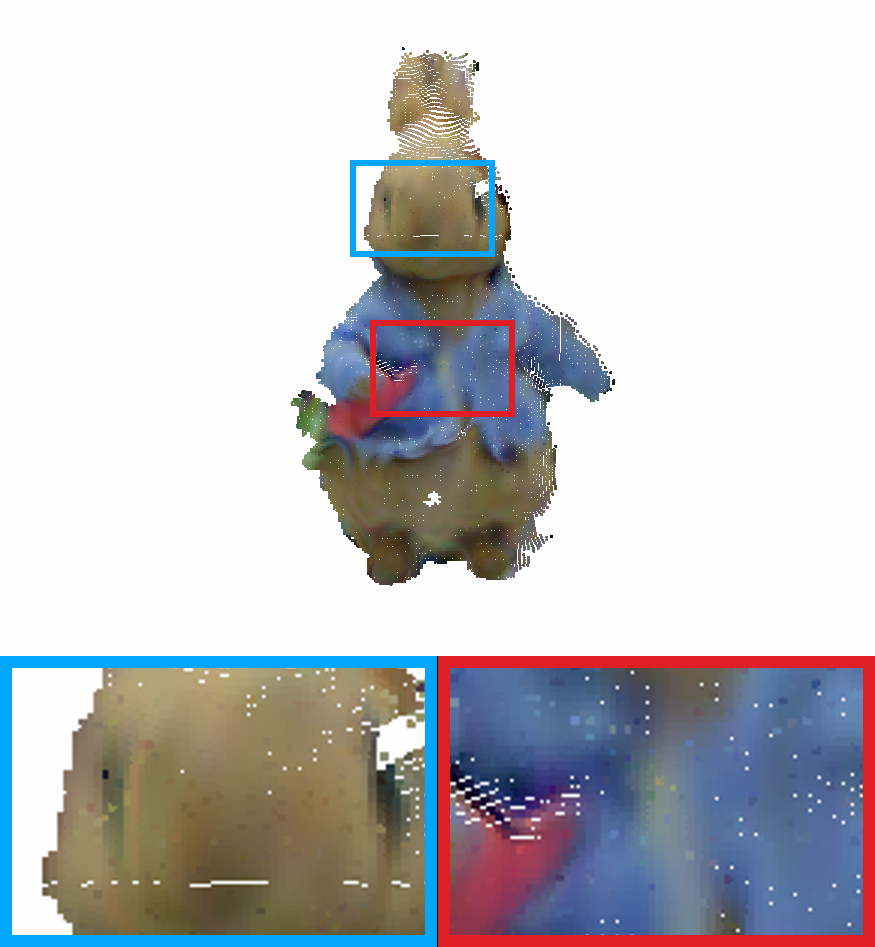} &
		\includegraphics[width=\linewidth/7-0.5ex]{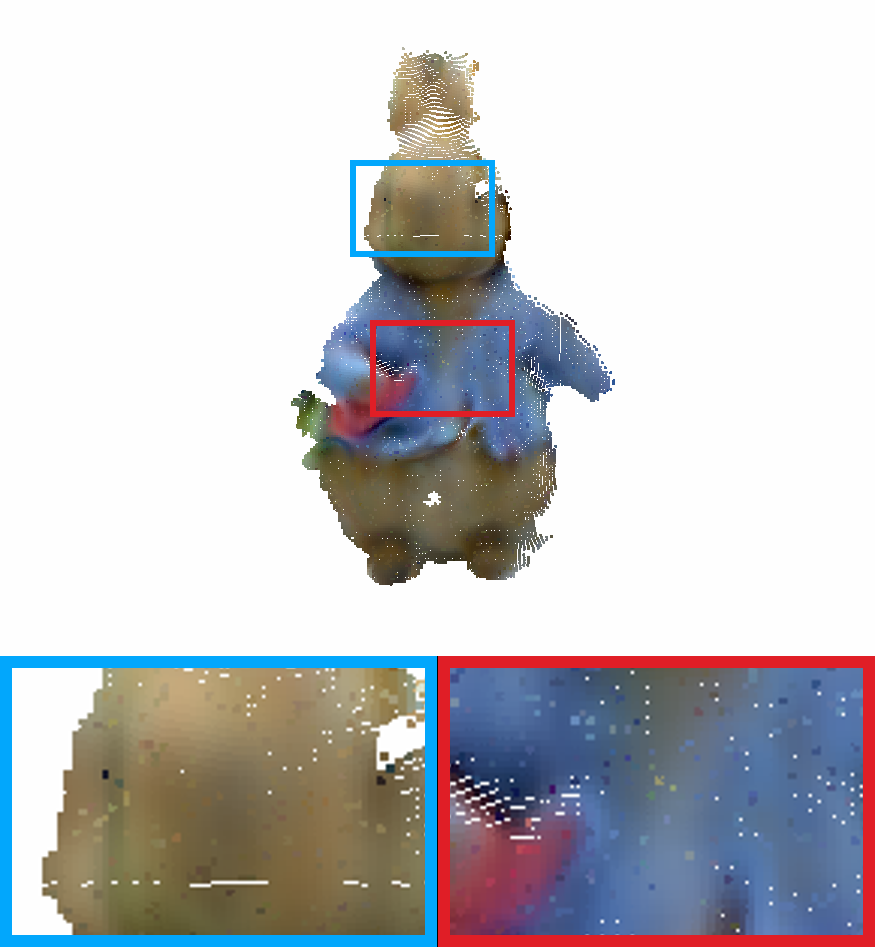} &
		\includegraphics[width=\linewidth/7-0.5ex]{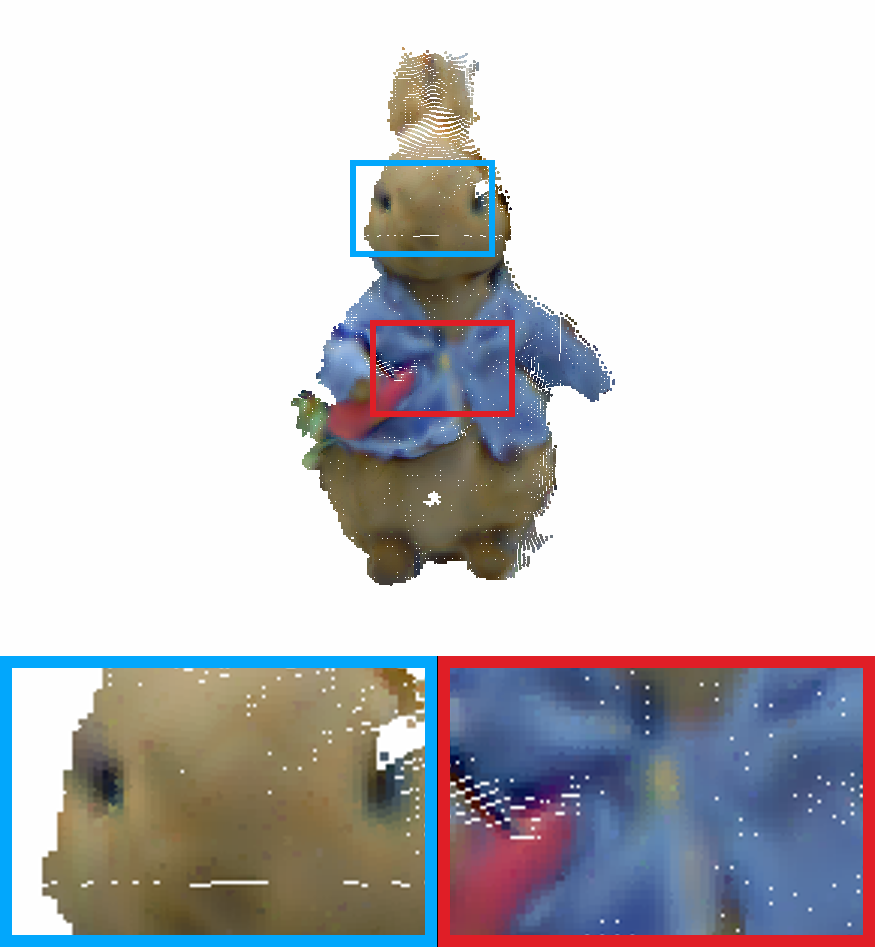} &
		\includegraphics[width=\linewidth/7-0.5ex]{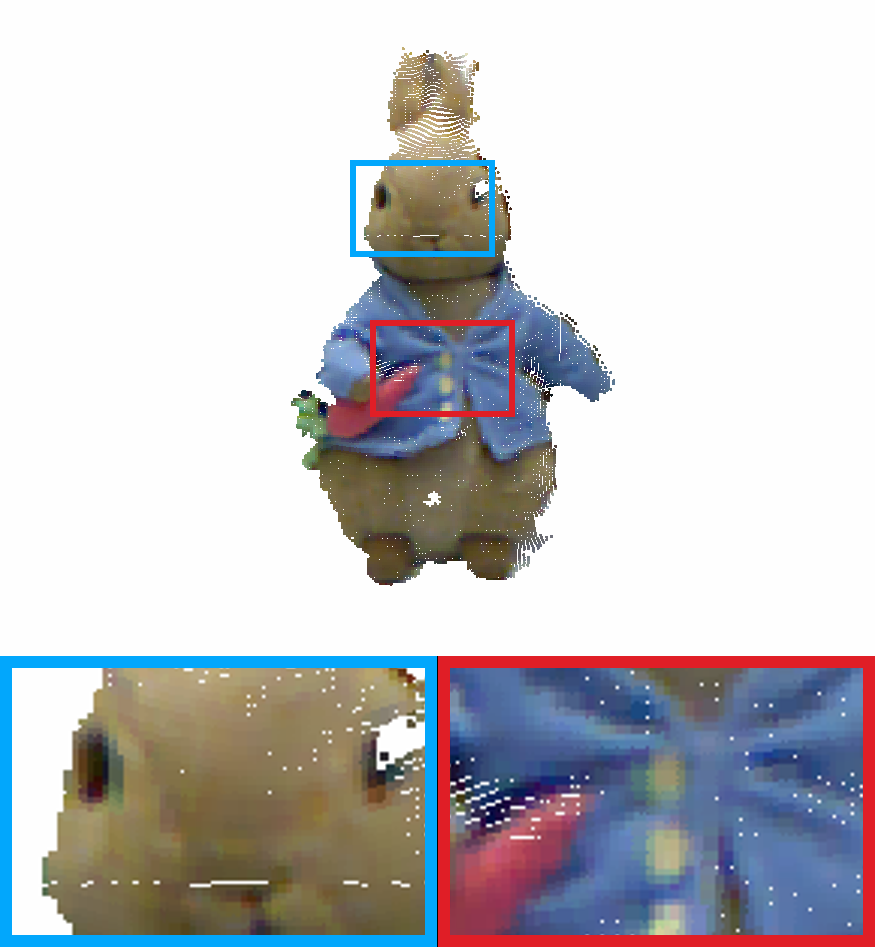}

	\end{tabular*}
	\caption{Visual recovery results of point cloud completion at a sampling rate of $10\%$.} \label{fig:point cloud}
\end{figure*}

Finally, we conduct experiments on multi-dimensional data beyond mesh grids: point clouds, which can hardly be achieved by traditional tensor completion models.
Point clouds are widely used in various fields, including computer-aided design, virtual reality, and autonomous driving, to represent the shape or surface characteristics of objects or environments. Each point cloud refers to a collection of points in three-dimensional space, where each point contains spatial coordinates and RGB values.
In our experiments, each point cloud contains about $20\,000$ points. We random select $5\%$, $10\%$, $15\%$, or $20\%$ of the points as the observation respectively, and recover the RGB values at the remaining points based on their coordinates.

Table \ref{tab:pointcloud} lists the quantitative recovery results of point cloud completion. Proposed NO-CTR method consistently outperforms all competing methods across all point clouds and all sampling rates, as indicated by the lowest NRMSE and highest $R^2$ in every experimental setting. Notably, this superiority holds even at both low and high sampling rates, demonstrating its effectiveness for point cloud completion.
Figure \ref{fig:point cloud} shows the visual recovery results of point cloud completion when the sampling rates are $10\%$. Proposed NO-CTR successfully reconstructs the surface details of these 3D models. The point clouds recovered by proposed NO-CTR contain more accurate details, especially in capturing distinctive features.

\section{Discussion}

\subsection{Contribution of Continuous and Nonlinear Mode-$n$ Operators}

The continuous and nonlinear mode-$n$ operators are the heart of the NO-CTR. To discuss the contribution of continuous and nonlinear mode-$n$ operators, in this subsection, we consider different operators, including discrete and linear mode-$n$ operators, and continuous and nonlinear mode-$n$ operators.
Table \ref{tab:mode-n operators} lists the quantitative recovery results of MSI completion with different operators.
It is obvious that continuous and nonlinear mode-$n$ operators play a huge role in improving the representation capabilities of NO-CTR, achieving more powerful representation capabilities. Compared with the previous methods with discrete and linear mode-$n$ operators, proposed NO-CTR with continuous and nonlinear mode-$n$ operators demonstrates significant advantages.

\begin{table}[!htb]
	\caption{Quantitative Recovery Results of MSI Completion With Different Mode-$n$ Operators} \label{tab:mode-n operators}
	\begin{tabular*}{\linewidth}{CCCCCC@{}}
		\toprule
		\multicolumn{2}{c}{Sampling Rate} & \multicolumn{2}{c}{10\%} & \multicolumn{2}{c}{20\%} \\
		\cmidrule{1-2} \cmidrule{3-4} \cmidrule{5-6}
		MSI & Mode-$n$ Operators & PSNR & SSIM & PSNR & SSIM \\
		\midrule
		\multirow{3}{*}{\emph{Flowers}}
		& No Operator & 30.565 & 0.898 & 34.367 & 0.948 \\
		& Discrete \& Linear & 39.175 & 0.984 & 43.119 & 0.984 \\
		& Continuous \& Nonlinear & \textbf{42.591} & \textbf{0.994} & \textbf{49.717} & \textbf{0.999} \\
		\midrule
		\multirow{3}{*}{\emph{Toy}}
		& No Operator & 26.190 & 0.835 & 29.087 & 0.895 \\
		& Discrete \& Linear & 38.907 & 0.982 & 41.395 & 0.991 \\
		& Continuous \& Nonlinear & \textbf{42.294} & \textbf{0.995} & \textbf{48.878} & \textbf{0.999} \\
		\bottomrule
	\end{tabular*}
	The \textbf{best} results are highlighted.
\end{table}

\subsection{Architectures of Neural Operators} \label{sec:contribution of neural operators}

In NO-CTR, the continuous and nonlinear mode-$n$ operators can be elegantly deployed with neural operators, which directly map the continuous core tensor function to the continuous target tensor function.
To discuss the influence of neural operators, in this subsection, we consider two popular neural operators: Fourier neural operators (FNO) \cite{li2021fourier} and deep operator networks (DeepONet) \cite{lu2021learning}.
Table \ref{tab:inducing operators} lists the quantitative recovery results of MSI completion with different neural operators. Neural operators indeed enhance the performance of the NO-CTR, and DeepONet achieves better performance due to its flexible architecture.

\begin{table}[!htb]
	\caption{Quantitative Recovery Results of MSI Completion With Different Neural Operators} \label{tab:inducing operators}
	\begin{tabular*}{\linewidth}{CCCCCC@{}}
		\toprule
		\multicolumn{2}{c}{Sampling Rate} & \multicolumn{2}{c}{10\%} & \multicolumn{2}{c}{20\%} \\
		\cmidrule{1-2} \cmidrule{3-4} \cmidrule{5-6}
		MSI & Architecture & PSNR & SSIM & PSNR & SSIM \\
		\midrule
		\multirow{3}{*}{\emph{Flowers}}
		& No Operator & 30.565 & 0.898 & 34.367 & 0.948 \\
		& FNO & 40.940 & 0.992 & 47.446 & 0.997 \\
		& DeepONet & \textbf{42.591} & \textbf{0.994} & \textbf{49.717} & \textbf{0.999} \\
		\midrule
		\multirow{3}{*}{\emph{Toy}}
		& No Operator & 26.190 & 0.835 & 29.087 & 0.895 \\
		& FNO & 41.167 & 0.994 & 46.076 & 0.998 \\
		& DeepONet & \textbf{42.294} & \textbf{0.995} & \textbf{48.878} & \textbf{0.999} \\
		\bottomrule
	\end{tabular*}
	The \textbf{best} results are highlighted.
\end{table}

\subsection{Influence of Continuous Core Tensor Function} \label{sec:contribution of continuous core tensor function}

The continuous core tensor function is also an important element of the NO-CTR. In this subsection, we discuss the influence of the continuous core tensor function.
Here we consider different representation methods for the continuous core tensor function, including LRTFR \cite{luo2024low}, positional encoding-based multi-layer perceptron (referred to as PE-MLP) \cite{tancik2020fourier}, and SIREN \cite{sitzmann2020implicit}.
Table \ref{tab:core tensors} list the quantitative recovery results of MSI completion with different representation methods for the continuous core tensor function.
LRTFR imposes low-rank constraints on the continuous core tensor functions, which limits its representation capability to some extent. SIREN demonstrates the best performance here.

\begin{table}[!htb]
	\caption{Quantitative Recovery Results of MSI Completion With Different Representation Methods for the Continuous Core Tensor Function} \label{tab:core tensors}
	\begin{tabular*}{\linewidth}{CCCCCC@{}}
		\toprule
		\multicolumn{2}{c}{Sampling Rate} & \multicolumn{2}{c}{10\%} & \multicolumn{2}{c}{20\%} \\
		\cmidrule{1-2} \cmidrule{3-4} \cmidrule{5-6}
		MSI & Core & PSNR & SSIM & PSNR & SSIM \\
		\midrule
		\multirow{3}{*}{\emph{Flowers}}
		& LRTFR & 39.586 & 0.983 & 44.232 & 0.992 \\
		& PE-MLP & 42.086 & 0.993 & 47.326 & 0.998 \\
		& SIREN & \textbf{42.591} & \textbf{0.994} & \textbf{49.717} & \textbf{0.999} \\
		\midrule
		\multirow{3}{*}{\emph{Toy}}
		& LRTFR & 37.781 & 0.980 & 41.948 & 0.991 \\
		& PE-MLP & 40.831 & 0.993 & 45.765 & 0.998\\
		& SIREN & \textbf{42.294} & \textbf{0.995} & \textbf{48.878} & \textbf{0.999} \\
		\bottomrule
	\end{tabular*}
	The \textbf{best} results are highlighted.
\end{table}

\subsection{Number of Learnable Parameters}

In this subsection, we compare the number of learnable parameters among different continuous tensor function representations, including SIREN \cite{sitzmann2020implicit}, LRTFR \cite{luo2024low}, and proposed NO-CTR.
The number of learnable parameters is a key indicator of model complexity, and is also related to training costs and speed.
The proposed NO-CTR has the largest number of parameters, as it involves a continuous core tensor function and a series of neural operator-based continuous and nonlinear mode-$n$ operators.
Despite having more parameters, NO-CTR achieves significantly better recovery performance. This indicates that the increase in parameters is reasonable and effective.

\begin{table}[!htb]
	\caption{Quantitative Recovery Results and Number of Parameters of MSI Completion With Different Continuous Tensor Function Representations}
	\begin{tabular*}{\linewidth}{CCCCCCCC@{}}
		\toprule
		\multicolumn{2}{c}{Sampling Rate} & \multicolumn{3}{c}{10\%} & \multicolumn{3}{c}{20\%} \\
		\cmidrule{1-2} \cmidrule{3-5} \cmidrule{6-8}
		MSI & Method & PSNR & SSIM & Parameters & PSNR & SSIM & Parameters \\
		\midrule
		\multirow{3}{*}{\emph{Flowers}}
		& SIREN & 38.681 & 0.977 & \textbf{16\,711} & 46.041 & 0.995 & \textbf{47\,671} \\
		& LRTFR & 36.879 & 0.965 & 130\,191 & 43.249 & 0.991 & 331\,023  \\
		& NO-CTR & \textbf{42.591} & \textbf{0.994} & 507\,341 & \textbf{49.717} & \textbf{0.999} & 472\,721 \\
		\midrule
		\multirow{3}{*}{\emph{Toy}}
		& SIREN & 37.259 & 0.974 & \textbf{16\,711} & 45.306 & 0.995 & \textbf{47\,671} \\
		& LRTFR & 37.661 & 0.975 & 130\,191 & 43.099 & 0.991 & 595\,231 \\
		& NO-CTR & \textbf{42.294} & \textbf{0.995} & 597\,791 & \textbf{48.878} & \textbf{0.999} & 599\,291 \\
		\bottomrule
	\end{tabular*}
	The \textbf{best} results are highlighted.
\end{table}

\subsection{Hyper-Parameter Sensitivity}

In this subsection, we discuss the hyper-parameter sensitivity. We focus on the heart of the NO-CTR: continuous and nonlinear mode-$n$ operators.
The continuous and nonlinear mode-$n$ operators are elegantly deployed with DeepONets \cite{lu2021learning}. Specifically, for each inducing operator in Equation \eqref{eq:no-ctr},
\begin{equation} \label{eq:deeponet}
	F_n(v)(y) = \sum_{p=1}^{P_n} b^{(n)}_p\bigl(v(z_1),v(z_2),\cdots,v(z_{m_n})\bigr) t^{(n)}_p(y),
\end{equation}
where $v \in C([0,1])$ is the input function, $y \in [0,1]$ is a coordinate, $\{b^{(n)}_p\}_{p=1}^{P_n}$ are scalar-valued functions called branch networks, $\mat{t}^{(n)}$ is a vector-valued function called trunk network, $\{t^{(n)}_p\}_{p=1}^{P_n}$ are $P_n$ components of $\mat{t}^{(n)}$, and $z_i \in [0,1]$ ($i=1,2,\cdots,m_n$) are fixed location called sensors.
Here we consider two hyper-parameters in DeepONet: the number of sensors $m_n$ and the number of branch networks $P_n$.

\subsubsection{Number of Sensors}

DeepONet requires the input function values at a sufficient but finite number of locations (i.e. $\{z_i\}_{i=1}^{m_n}$ in Equation \eqref{eq:deeponet}) and these locations are called ``sensors''. The number of sensors (i.e. $m_n$ in Equation \eqref{eq:deeponet}) directly determines the accuracy and information of the input of DeepONet.
Figure \ref{fig:sensors} shows the quantitative recovery results w.r.t. different numbers of sensors. As the number of sensors increases, both PSNR and SSIM values first rise sharply; after that, they exhibit a stable trend with slight fluctuations, implying that a proper number of sensors is sufficient to achieve favorable recovery performance.

\begin{figure}[!htb]
	\centering
	\includegraphics[width=0.49\linewidth]{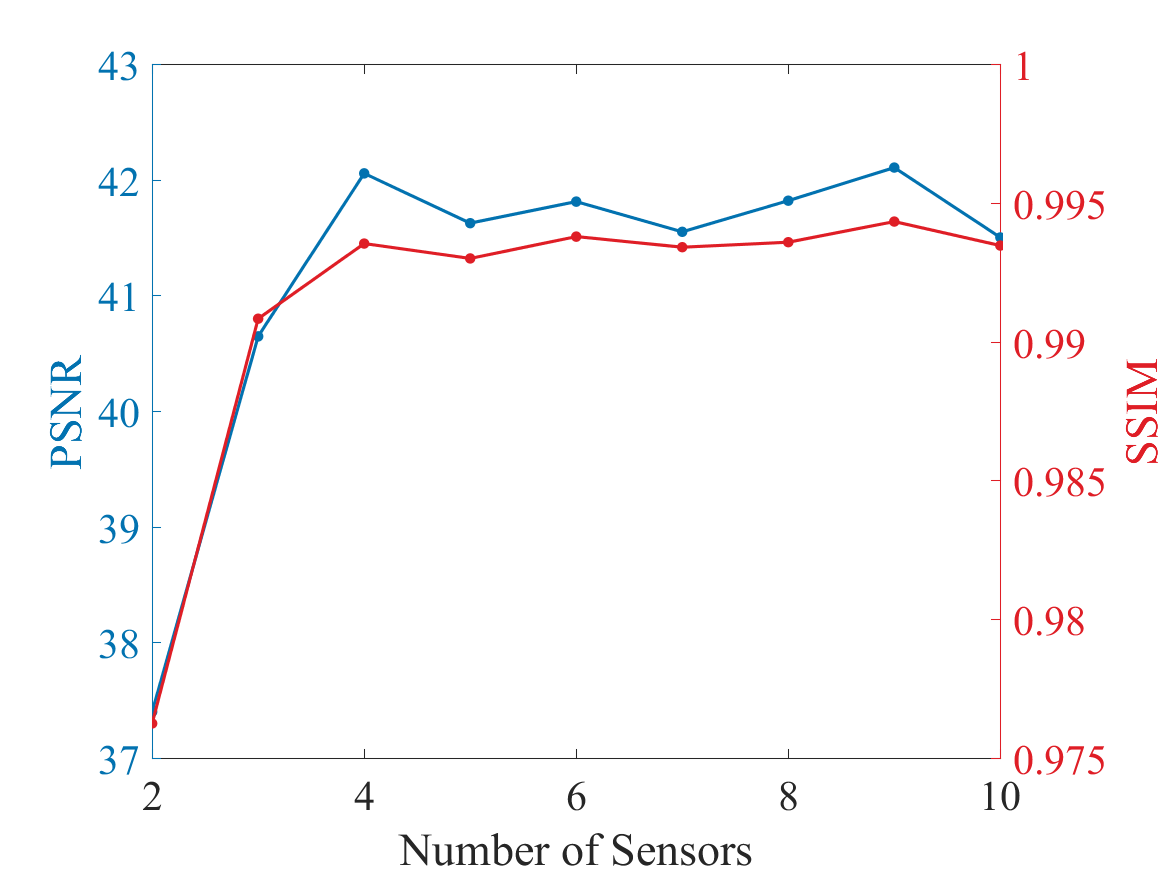}
	\includegraphics[width=0.49\linewidth]{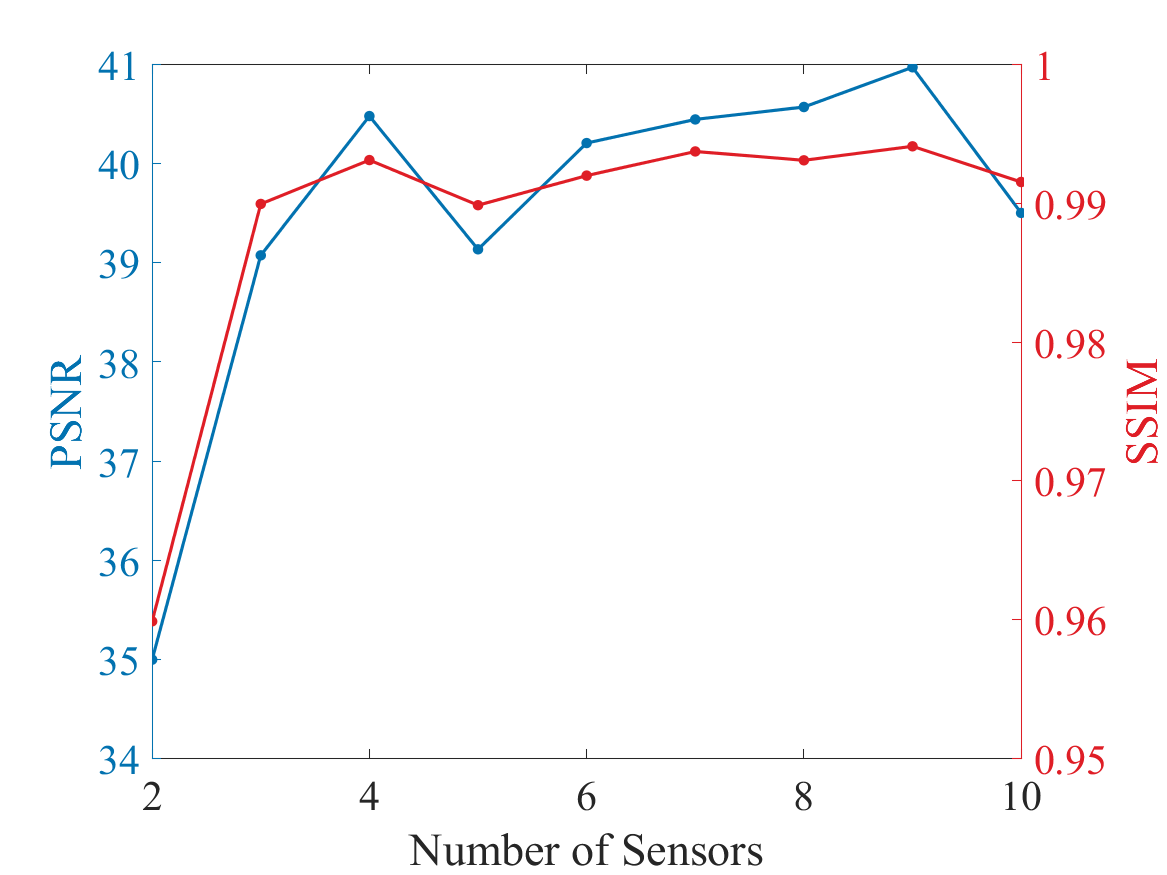}
	\caption{Quantitative recovery results w.r.t. different numbers of sensors on the MSIs \emph{Flowers} (left) and \emph{Toy} (right) at a sampling rate of $10\%$.} \label{fig:sensors}
\end{figure}

\subsubsection{Number of Branch Networks}

DeepONet consist of a trunk network and several branch networks. These branch networks are mainly used to handle the input function. According to \cite{lu2021learning}, using lots of branch networks is inefficient, so all the branch networks are merged into one single neural network, which can be viewed as all the branch networks sharing the same set of parameters.
The number of branch networks (i.e. $P_n$ in Equation \eqref{eq:deeponet}) is the output dimension of this neural network.
Figure \ref{fig:branches} shows the quantitative recovery results w.r.t. different numbers of branch networks. The results show that both PSNR and SSIM values initially rise with an increasing number of branches, reaching an optimal range. Beyond this point, further expanding the branch networks leads to slight fluctuations.

\begin{figure}[!htb]
	\centering
	\includegraphics[width=0.49\linewidth]{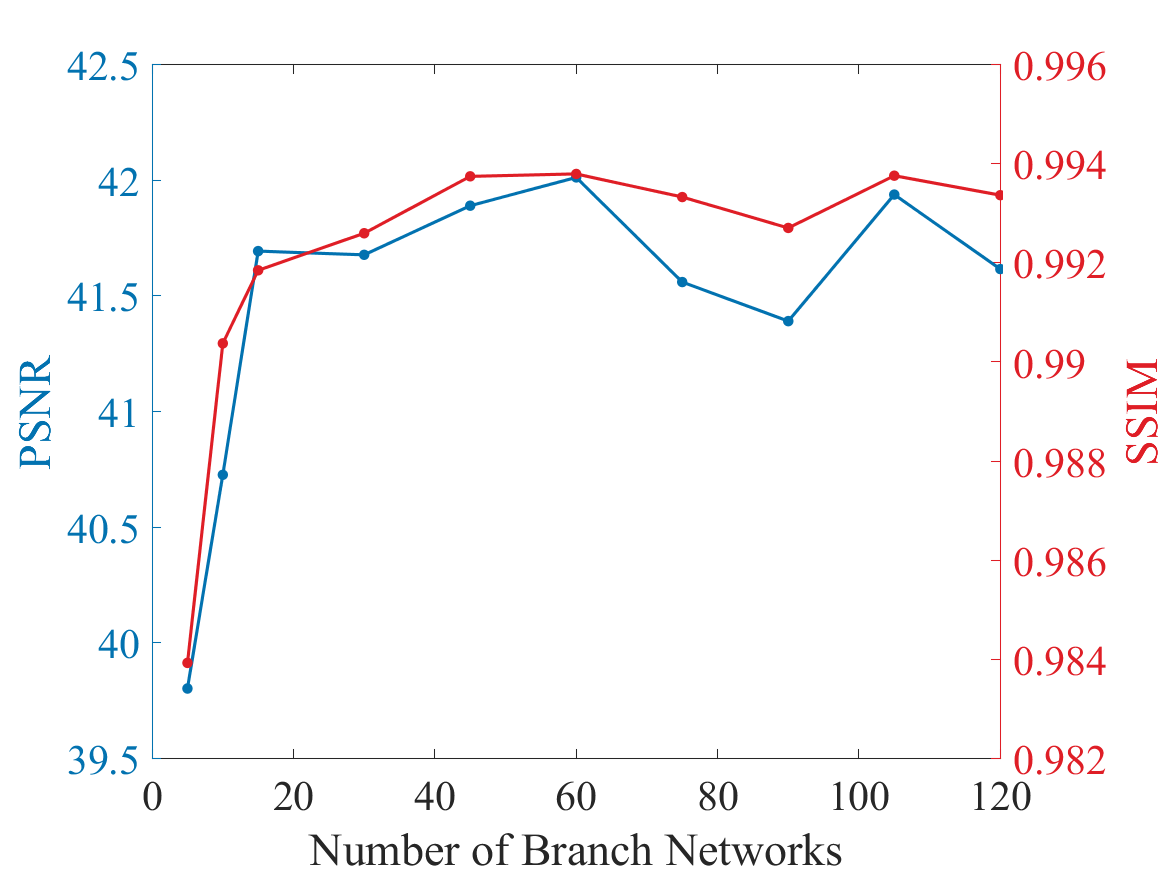}
	\includegraphics[width=0.49\linewidth]{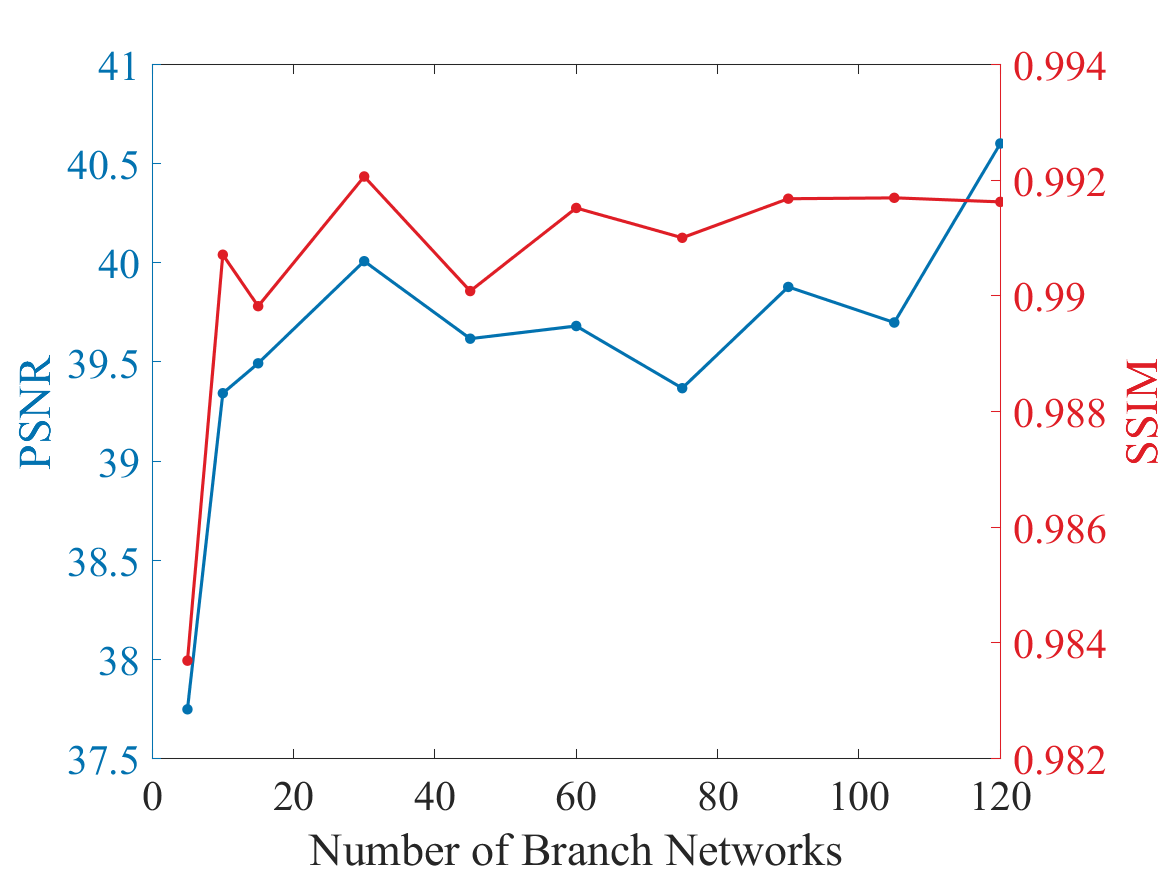}
	\caption{Quantitative recovery results w.r.t. different numbers of branch networks on the MSIs \emph{Flowers} (left) and \emph{Toy} (right) at a sampling rate of $10\%$.} \label{fig:branches}
\end{figure}

\section{Conclusion}

To unlock the potential of continuous tensor function representations, in this paper, we first suggested neural operator-grounded mode-$n$ operators as a continuous and nonlinear alternative, which provides a genuine continuous representation of real-world data and can ameliorate discretization artifacts.
Empowering with continuous and nonlinear mode-$n$ operators, we proposed a neural operator-grounded continuous tensor function representation (abbreviated as NO-CTR), which can more faithfully represent complex real-world data compared with classic discrete tensor representations and continuous tensor function representations.
Theoretically, we also proved that any continuous tensor function can be approximated by NO-CTR. To examine the capability of NO-CTR, we suggested an NO-CTR-based multi-dimensional data completion model.
Extensive experiments across various data on regular mesh grids (multi-spectral images and color videos), on mesh girds with different resolutions (Sentinel-2 images) and beyond mesh grids (point clouds) demonstrated the superiority of NO-CTR.

%\section*{Acknowledgments}
%This should be a simple paragraph before the References to thank those individuals and institutions who have supported your work on this article.

%\nocite{*}

%\section{References Section}
%You can use a bibliography generated by BibTeX as a .bbl file.
% BibTeX documentation can be easily obtained at:
% http://mirror.ctan.org/biblio/bibtex/contrib/doc/
% The IEEEtran BibTeX style support page is:
% http://www.michaelshell.org/tex/ieeetran/bibtex/

 % argument is your BibTeX string definitions and bibliography database(s)
\bibliographystyle{IEEEtran}
\bibliography{./ref}
%
%\section{Simple References}
%You can manually copy in the resultant .bbl file and set second argument of $\backslash${\tt{begin}} to the number of references
% (used to reserve space for the reference number labels box).

%\newpage

%\section{Biography Section}
%If you have an EPS/PDF photo (graphicx package needed), extra braces are
% needed around the contents of the optional argument to biography to prevent
% the LaTeX parser from getting confused when it sees the complicated
% $\backslash${\tt{includegraphics}} command within an optional argument. (You can create
% your own custom macro containing the $\backslash${\tt{includegraphics}} command to make things
% simpler here.)
%
%\vspace{11pt}
%
%{\bf{If you include a photo:}\vspace{-33pt}}
%\begin{IEEEbiography}%[{\includegraphics[width=1in,height=1.25in,clip,keepaspectratio]{fig1}}]
%{Michael Shell}
%Use $\backslash${\tt{begin\{IEEEbiography\}}} and then for the 1st argument use $\backslash${\tt{includegraphics}} to declare and link the author photo.
%Use the author name as the 3rd argument followed by the biography text.
%\end{IEEEbiography}
%
%\vspace{11pt}
%
%{\bf{If you will not include a photo:}\vspace{-33pt}}
%\begin{IEEEbiographynophoto}{John Doe}
%Use $\backslash${\tt{begin\{IEEEbiographynophoto\}}} and the author name as the argument followed by the biography text.
%\end{IEEEbiographynophoto}

\vfill

{\onecolumn \allowdisplaybreaks \appendix[Proof of Theorem \ref{th:approx}]
\begin{lemma}[\cite{calin2020universal}] \label{th:approx of function}
	Let $\sigma$ be any continuous discriminatory function. Then the finite sums of the form
	\[ \sum_{k=1}^K a_k \sigma (\mat{w}_k^T \mat{y} + b_k), \quad \mat{w}_k \in \mathbb{R}^N, \ a_k, b_k \in \mathbb{R} \]
	are dense in $C([0,1]^N)$.
\end{lemma}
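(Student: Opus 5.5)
This is Cybenko's classical density argument, and I would reproduce it by duality rather than by any constructive approximation. Let
\[ S = \Bigl\{ \sum_{k=1}^K a_k \sigma(\mat{w}_k^T \mat{y} + b_k) : K \in \mathbb{N},\ \mat{w}_k \in \mathbb{R}^N,\ a_k, b_k \in \mathbb{R} \Bigr\}. \]

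First I would check two things about $S$. Since $\sigma$ is continuous and each map $\mat{y} \mapsto \mat{w}_k^T \mat{y} + b_k$ is affine, every element of $S$ lies in $C([0,1]^N)$. Moreover $S$ is a linear subspace: sums and scalar multiples of finite sums of this form are again such finite sums, because the coefficients $a_k$ are arbitrary reals. Hence its closure $\overline{S}$ (in the sup norm) is a closed linear subspace of $C([0,1]^N)$. The claim is exactly that $\overline{S} = C([0,1]^N)$.

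I would argue by contradiction and suppose $\overline{S} \neq C([0,1]^N)$. By the Hahn--Banach theorem there is then a bounded linear functional $L$ on $C([0,1]^N)$ with $L \neq 0$ and $L|_{\overline{S}} = 0$. The concrete way to get it is to extend the functional on $\overline{S} \oplus \mathbb{R} f_0$, for some $f_0 \notin \overline{S}$, that vanishes on $\overline{S}$ and equals the (positive) distance from $f_0$ to $\overline{S}$ at $f_0$. Since $[0,1]^N$ is compact Hausdorff, the Riesz representation theorem gives a finite signed regular Borel measure $\mu$ on $[0,1]^N$ with
\[ L(f) = \int_{[0,1]^N} f(\mat{y}) \, d\mu(\mat{y}) \quad \text{for all } f \in C([0,1]^N). \]
Because $L \neq 0$, we have $\mu \neq 0$. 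In particular, since each single neuron $\sigma(\mat{w}^T \mat{y} + b)$ belongs to $S$,
\[ \int_{[0,1]^N} \sigma(\mat{w}^T \mat{y} + b) \, d\mu(\mat{y}) = 0 \quad \text{for all } \mat{w} \in \mathbb{R}^N,\ b \in \mathbb{R}. \]

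Now I would invoke the definition of a discriminatory function: $\sigma$ is discriminatory precisely when the vanishing of these integrals for all $\mat{w}, b$ forces $\mu = 0$. This contradicts $\mu \neq 0$, so $\overline{S} = C([0,1]^N)$, which is the asserted density.

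There is no real obstacle, since all the difficulty is packed into the hypothesis that $\sigma$ is discriminatory. The only points needing care are applying Hahn--Banach to a closed proper subspace, and using the signed-measure form of Riesz representation on the compact cube. If the reader wants a concrete instance, I would remark that bounded sigmoidal $\sigma$ are discriminatory. This follows because $\sigma(\lambda(\mat{w}^T \mat{y} + b) + \varphi)$ converges boundedly to indicators of half-spaces as $\lambda \to \infty$, and a measure annihilating all half-spaces has vanishing Fourier transform. That fact is not needed for the lemma as stated.
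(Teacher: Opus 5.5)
Your proof is correct and matches the source the paper relies on: the paper gives no proof of its own and defers to \cite{calin2020universal}, which reproduces Cybenko's duality argument (Hahn--Banach gives a nonzero bounded functional annihilating the closed span, Riesz representation turns it into a nonzero finite signed regular Borel measure, and the discriminatory hypothesis forces $\mu = 0$). Your explicit construction of the annihilating functional from the distance of $f_0$ to $\overline{S}$ is valid, and you are right that the sigmoidal remark is not needed for the lemma as stated.
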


\begin{lemma}[\cite{lu2021learning,chen1995universal}] \label{th:approx of operator}
	Suppose that $W$ is a Banach space, $K_1 \subseteq W$ and $K_2 \subseteq \mathbb{R}^N$ are two compact sets in $W$ and $\mathbb{R}^N$, respectively, $V$ is a compact set in $C(K_1)$, $F$ is a continuous operator, which maps $V$ to $C(K_2)$. Then for any $\varepsilon > 0$, there are positive integers $m$ and $P$, continuous functions $b_p \in C(\mathbb{R}^m)$ and $t_p \in C(K_2)$ implemented by deep neural networks, $p=1,2,\cdots,P$, $\mat{x}_1, \mat{x}_2, \cdots \mat{x}_m \in K_1$, such that
	\[ \left| F(v)(\mat{y}) - \sum_{p=1}^{P} b_p(v(\mat{x}_1), v(\mat{x}_2), \cdots, v(\mat{x}_m)) t_p(\mat{y}) \right| < \varepsilon \]
	holds for all $v \in V$ and $\mat{y} \in K_2$.
\end{lemma}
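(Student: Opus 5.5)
The plan follows the classical argument of Chen and Chen. It proceeds in three steps: discretize the input, then discretize the output, then replace the resulting continuous coefficient and basis functions by neural networks using Lemma~\ref{th:approx of function}. Fix $\varepsilon>0$ and split the error into three pieces of size $\varepsilon/3$.

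\emph{Step 1 (input discretization).} Since $V\subseteq C(K_1)$ is compact, the Arzel\`a--Ascoli theorem makes $V$ uniformly bounded, say by $M$, and equicontinuous with a common modulus $\omega$. For $\delta>0$, pick a finite $\delta$-net $\mat{x}_1,\dots,\mat{x}_m\in K_1$ and a continuous partition of unity $\{\phi_j\}_{j=1}^m$ on $K_1$ with $\operatorname{supp}\phi_j\subseteq B(\mat{x}_j,\delta)$. Define the interpolation map
\[ T_\delta v=\sum_{j=1}^m v(\mat{x}_j)\phi_j . \]
Then $\|T_\delta v-v\|_\infty\le\omega(\delta)$ uniformly on $V$.

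The difficulty is that $F$ is only defined on $V$, while $T_\delta v$ need not lie in $V$. To handle this, extend $F$ to a continuous map $\tilde F\colon C(K_1)\to C(K_2)$ via Dugundji's extension theorem, which applies because $V$ is closed in a metric space and $C(K_2)$ is locally convex.

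Next, take the set $U=V\cup\bigcup_{k}T_{\delta_k}(V)$ for a sequence $\delta_k\to0$. This set is compact: each $T_{\delta_k}(V)$ is compact, and $T_{\delta_k}v\to v$ uniformly in $v\in V$. Hence $\tilde F$ is uniformly continuous on $U$, so for $k$ large,
\[ \|F(v)-\tilde F(T_{\delta_k}v)\|_\infty<\varepsilon/3 \quad\text{for all } v\in V. \]
Fix this $\delta$ and the corresponding $m$ and $\mat{x}_j$. Note that $\tilde F(T_\delta v)=G\bigl(v(\mat{x}_1),\dots,v(\mat{x}_m)\bigr)$, where
\[ G(\xi)=\tilde F\Bigl(\sum_j\xi_j\phi_j\Bigr) \]
is continuous from the compact set $D=\{(v(\mat{x}_j))_j : v\in V\}\subseteq[-M,M]^m$ into $C(K_2)$.

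\emph{Step 2 (output discretization).} The set $G(D)$ is compact in $C(K_2)$, hence equicontinuous. Choose a finite net $\mat{y}_1,\dots,\mat{y}_P$ of $K_2$ and a partition of unity $\{\psi_p\}$ on $K_2$ with sufficiently small supports. Then
\[ \Bigl|G(\xi)(\mat{y})-\sum_{p=1}^P G(\xi)(\mat{y}_p)\,\psi_p(\mat{y})\Bigr|<\varepsilon/3 \]
uniformly in $\xi\in D$ and $\mat{y}\in K_2$. The coefficient functions $c_p(\xi)=G(\xi)(\mat{y}_p)$ are continuous on $D$. Extend them by Tietze to the cube $[-M,M]^m$, which after rescaling is $[0,1]^m$.

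\emph{Step 3 (network realization).} Using Lemma~\ref{th:approx of function}, approximate each $c_p$ uniformly on $[-M,M]^m$ by a network $b_p$, and each $\psi_p$ uniformly on $K_2$ (rescaled into a cube) by a network $t_p$.

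A triangle inequality then bounds the remaining error by
\[ \sum_p\Bigl(\|c_p-b_p\|_\infty\,\|\psi_p\|_\infty+\|b_p\|_\infty\,\|\psi_p-t_p\|_\infty\Bigr). \]
Here $\|\psi_p\|_\infty\le1$ and $\|b_p\|_\infty\le\sup|c_p|+1$, and $\sup|c_p|$ is bounded because $G(D)$ is bounded. So this error can be made smaller than $\varepsilon/3$ by choosing the network tolerances of order $\varepsilon/(P\cdot\mathrm{const})$. Summing the three pieces gives the claimed bound for all $v\in V$ and $\mat{y}\in K_2$.

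The main obstacle is Step 1: making $F(T_\delta v)$ meaningful off $V$ and securing uniform continuity on a compact set containing both $V$ and its interpolants. My first approach is the Dugundji extension combined with the compactness of $U$. If that proves delicate, the fallback is Chen--Chen's original device, which shows directly that $V\cup\bigcup_\delta T_\delta(V)$ is compact and works on it, bypassing a global extension.
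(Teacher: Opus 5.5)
Your proof is correct and is essentially the classical Chen--Chen argument that the paper imports by citation rather than reproving: partition-of-unity interpolation of the input, uniform continuity on the compact enlargement $V\cup\bigcup_k T_{\delta_k}(V)$, discretization of the output, and realization of the finite-dimensional pieces as networks via Lemma~\ref{th:approx of function}. The deviations are minor: because you discretize the input first, you need Dugundji to extend the $C(K_2)$-valued operator, whereas doing the output step first (as Chen--Chen do) reduces everything to scalar functionals on $V$ such as $v\mapsto F(v)(\mat{y}_p)$, for which Tietze suffices; your Tietze extension of $c_p$ is redundant, since $G$ is already defined on all of $\mathbb{R}^m$; and each $\psi_p$ must first be extended from $K_2$ to a cube, since Lemma~\ref{th:approx of function} is stated on $[0,1]^N$.
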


\begin{IEEEproof}[Proof of Theorem \ref{th:approx}]
	According to Lemma \ref{th:approx of function}, there is a two-layer fully connected neural network
	\[ \conttensor{G} \colon [0,1]^N \to \mathbb{R}, \ \mat{y} \mapsto \sum_{k=1}^{K} a_k \sigma(\mat{w}_k^T \mat{y} + b_k) \]
	such that
	\begin{equation} \label{eq:approx of function}
		\| \conttensor{G} - \conttensor{X} \| < \frac{\varepsilon}{2}.
	\end{equation}
	Noticing that $\conttensor{X}$ is a continuous function defined on a compact set $[0,1]^N$, so $\conttensor{X}$ is bounded, and $\conttensor{G}$ is also bounded. Assume that
	\begin{equation} \label{eq:G bound}
		\| \conttensor{G} \| \leqslant M.
	\end{equation}

	According to Lemma \ref{th:approx of operator}, there are $N$ neural operators implemented by DeepONet
	\[ F_n \colon C([0,1]) \to C([0,1]), \ u(\cdot) \mapsto \sum_{p=1}^{P_n} b^{(n)}_p(u(x_1),u(x_2),\cdots,u(x_{m_n})) t^{(n)}_p(\cdot), \]
	such that
	\[ | F_n(u)(y) - I(u)(y) | < \frac{\varepsilon}{(2^{N+1} - 2)M} \]
	holds for all $u \in C([0,1])$ and $y \in [0,1]$, so
	\begin{align*}
		\| F_n - I \| & = \sup_{\|u\|=1} \| (F_n - I)(u) \| \\
		& = \sup_{\|u\|=1} \| F_n(u) - I(u) \| \\
		& = \sup_{\|u\|=1} \sup_{y \in [0,1]} | (F_n(u) - I(u))(y) | \\
		& = \sup_{\|u\|=1} \sup_{y \in [0,1]} | F_n(u)(y) - I(u)(y) | \\
		& \leqslant \frac{\varepsilon}{(2^{N+1} - 2)M},
	\end{align*}
	where $I$ denotes identity transform, $n=1,2,\cdots,N$. Thus,
	\begin{align}
		\| \operator{F}_n^{\langle n \rangle} - \operator{I}^{\langle n \rangle} \| & = \sup_{\|\conttensor{X}\| = 1} \| (\operator{F}_n^{\langle n \rangle} - \operator{I}^{\langle n \rangle}) (\conttensor{X}) \| \notag \\
		& = \sup_{\|\conttensor{X}\| = 1} \sup_{\mat{y} \in [0,1]^N} | (\operator{F}_n^{\langle n \rangle} - \operator{I}^{\langle n \rangle}) (\conttensor{X}) (\mat{y}) | \notag \\
		& = \sup_{\|\conttensor{X}\| = 1} \sup_{\mat{y} \in [0,1]^N} | (F_n - I) (\conttensor{X}(y_1, \cdots, y_{n-1}, \cdot, y_{n+1}, \cdots, y_N)) (y_n) | \notag \\
		& = \sup_{\|\conttensor{X}\| = 1} \sup_{\bar{\mat{y}} \in [0,1]^{N-1}} \sup_{y_n \in [0,1]} | (F_n - I) (\conttensor{X}(y_1, \cdots, y_{n-1}, \cdot, y_{n+1}, \cdots, y_N)) (y_n) | \notag \\
		& = \sup_{\|\conttensor{X}\| = 1} \sup_{\bar{\mat{y}} \in [0,1]^{N-1}} \| (F_n - I) (\conttensor{X}(y_1, \cdots, y_{n-1}, \cdot, y_{n+1}, \cdots, y_N)) \| \notag \\
		& \leqslant \sup_{\|\conttensor{X}\| = 1} \sup_{\bar{\mat{y}} \in [0,1]^{N-1}} \| F_n - I \| \| \conttensor{X}(y_1, \cdots, y_{n-1}, \cdot, y_{n+1}, \cdots, y_N) \| \notag \\
		& = \| F_n - I \| \sup_{\|\conttensor{X}\| = 1} \sup_{\bar{\mat{y}} \in [0,1]^{N-1}} \sup_{y_n \in [0,1]} | \conttensor{X}(y_1, \cdots, y_{n-1}, y_n, y_{n+1}, \cdots, y_N) | \notag \\
		& = \| F_n - I \| \sup_{\|\conttensor{X}\| = 1} \|\conttensor{X}\| \notag \\
		& = \| F_n - I \| \notag \\
		& \leqslant \frac{\varepsilon}{(2^{N+1} - 2)M}, \label{eq:approx of operator}
	\end{align}
	where $\mat{y} = (y_1,y_2,\cdots,y_N) \in [0,1]^N$, $\bar{\mat{y}} = (y_1, \cdots, y_{n-1}, y_{n+1}, \cdots, y_N) \in [0,1]^{N-1}$, $\operator{F}_n^{\langle n \rangle}$ and $\operator{I}^{\langle n \rangle}$ are the continuous and nonlinear mode-$n$ operators induced by $F_n$ and $I$, respectively. Without loss of generality, we assume that
	\begin{equation} \label{eq:Fn bound}
		\| \operator{F}_n^{\langle n \rangle} \| \leqslant 2.
	\end{equation}

	Combining Equations \eqref{eq:approx of function} \eqref{eq:G bound} \eqref{eq:approx of operator} and \eqref{eq:Fn bound}, we obtain
	\begin{align*}
		& \mathrel{\phantom{=}} \| \conttensor{X} - \operator{F}_N^{\langle N \rangle} \circ \operator{F}_{N-1}^{\langle N-1 \rangle} \circ \cdots \circ \operator{F}_1^{\langle 1 \rangle} (\conttensor{G}) \| \\
		& \leqslant \| \operator{I}^{\langle N \rangle} \circ \operator{I}^{\langle N-1 \rangle} \circ \cdots \circ \operator{I}^{\langle 1 \rangle}(\conttensor{X}) - \operator{I}^{\langle N \rangle} \circ \operator{F}_{N-1}^{\langle N-1 \rangle} \circ \cdots \circ \operator{F}_1^{\langle 1 \rangle}(\conttensor{G}) \| \\
		& \mathrel{\phantom{=}}+ \| \operator{I}^{\langle N \rangle} \circ \operator{F}_{N-1}^{\langle N-1 \rangle} \circ \cdots \circ \operator{F}_1^{\langle 1 \rangle}(\conttensor{G}) - \operator{F}_N^{\langle N \rangle} \circ \operator{F}_{N-1}^{\langle N-1 \rangle} \circ \cdots \circ \operator{F}_1^{\langle 1 \rangle}(\conttensor{G}) \| \\
		& = \| \operator{I}^{\langle N \rangle} ( \operator{I}^{\langle N-1 \rangle} \circ \cdots \circ \operator{I}^{\langle 1 \rangle}(\conttensor{X}) - \operator{F}_{N-1}^{\langle N-1 \rangle} \circ \cdots \circ \operator{F}_1^{\langle 1 \rangle}(\conttensor{G}) ) \| + \| (\operator{I}^{\langle N \rangle} - \operator{F}_N^{\langle N \rangle}) (\operator{F}_{N-1}^{\langle N-1 \rangle} \circ \cdots \circ \operator{F}_1^{\langle 1 \rangle}(\conttensor{G})) \| \\
		& \leqslant \| \operator{I}^{\langle N \rangle} \| \| \operator{I}^{\langle N-1 \rangle} \circ \cdots \circ \operator{I}^{\langle 1 \rangle}(\conttensor{X}) - \operator{F}_{N-1}^{\langle N-1 \rangle} \circ \cdots \circ \operator{F}_1^{\langle 1 \rangle}(\conttensor{G}) \| + \| \operator{I}^{\langle N \rangle} - \operator{F}_N^{\langle N \rangle} \| \| \operator{F}_{N-1}^{\langle N-1 \rangle} \circ \cdots \circ \operator{F}_1^{\langle 1 \rangle}(\conttensor{G}) \| \\
		& \leqslant \| \operator{I}^{\langle N-1 \rangle} \circ \cdots \circ \operator{I}^{\langle 1 \rangle}(\conttensor{X}) - \operator{F}_{N-1}^{\langle N-1 \rangle} \circ \cdots \circ \operator{F}_1^{\langle 1 \rangle}(\conttensor{G}) \| + \| \operator{I}^{\langle N \rangle} - \operator{F}_N^{\langle N \rangle} \| \| \operator{F}_{N-1}^{\langle N-1 \rangle} \| \cdots \| \operator{F}_1^{\langle 1 \rangle} \| \| \conttensor{G} \| \\
		& \leqslant \cdots\cdots \\
		& \leqslant \| \operator{I}^{\langle 1 \rangle}(\conttensor{X}) - \operator{F}_1^{\langle 1 \rangle}(\conttensor{G}) \| + \sum_{k=1}^{N-1} \left( \| \operator{I}^{\langle k+1 \rangle} - \operator{F}_{k+1}^{\langle k+1 \rangle} \| \| \conttensor{G} \| \prod_{j=1}^{k} \| \operator{F}_j^{\langle j \rangle} \| \right) \\
		& \leqslant \| \conttensor{X} - \conttensor{G} \| + \| \operator{I}^{\langle 1 \rangle} - \operator{F}_1^{\langle 1 \rangle} \| \| \conttensor{G} \| + \sum_{k=1}^{N-1} \left( \| \operator{I}^{\langle k+1 \rangle} - \operator{F}_{k+1}^{\langle k+1 \rangle} \| \| \conttensor{G} \| \prod_{j=1}^{k} \| \operator{F}_j^{\langle j \rangle} \| \right) \\
		& < \frac{\varepsilon}{2} + \frac{\varepsilon}{(2^{N+1} - 2)M} M + \sum_{k=1}^{N-1} \left( \frac{\varepsilon}{(2^{N+1} - 2)M} M \cdot 2^k \right) \\
		& = \varepsilon.
	\end{align*}
\end{IEEEproof}
}

\end{document}